\documentclass[reqno]{amsart}

\usepackage[a4paper,margin=1in]{geometry}
\usepackage{amsmath,amssymb,amsfonts,amsthm,mathtools}
\usepackage{microtype}
\usepackage{enumitem}
\usepackage{hyperref}

\theoremstyle{plain}
\newtheorem{theorem}{Theorem}[section]
\newtheorem{lemma}[theorem]{Lemma}
\newtheorem{proposition}[theorem]{Proposition}

\theoremstyle{definition}
\newtheorem{definition}[theorem]{Definition}
\newtheorem{assumption}[theorem]{Assumption}
\newtheorem{example}[theorem]{Example}

\theoremstyle{remark}
\newtheorem{remark}[theorem]{Remark}

\newcommand{\R}{\mathbb{R}}
\newcommand{\N}{\mathbb{N}}
\newcommand{\E}{\mathbb{E}}

\newcommand{\cX}{\mathcal{X}}
\newcommand{\cY}{\mathcal{Y}}
\newcommand{\cF}{\mathcal{F}}

\DeclareMathOperator{\Lip}{Lip}

\DeclareMathOperator{\cS}{cS}

\newcommand{\risk}{\mathcal{R}}
\newcommand{\empRisk}{\widehat{\mathcal{R}}}

\title{Certified Learning under Distribution Shift: Sound Verification and Identifiable Structure}

\author{Chandrasekhar Gokavarapu}

\author{Sudhakar Gadde}

\author{Y. Rajasekhar }

\author{S. R. Bhargava}

\address{Lecturers in  Mathematics, Government College (Autonomous), Rajahmundry, Andhra Pradesh, India, PIN:533105}
 \email{chandrasekhargokavarapu@gmail.com}
\email{chandrasekhargokavarapu@gcrjy.ac.in}

\date{February 6, 2026}

\begin{document}
\maketitle

\begin{abstract}
\noindent
\textbf{Proposition.}
Let $f$ be a predictor trained on a distribution $P$ and evaluated on a shifted distribution $Q$.
Under verifiable regularity and complexity constraints, the excess risk under shift admits an explicit upper bound determined by a computable shift metric and model parameters.
We develop a unified framework in which (i) risk under distribution shift is certified by explicit inequalities, (ii) verification of learned models is sound for nontrivial sizes, and (iii) interpretability is enforced through identifiability conditions rather than post hoc explanations.
All claims are stated with explicit assumptions.
Failure modes are isolated.
Non-certifiable regimes are characterized.
\end{abstract}

\medskip
\noindent\textbf{Keywords.}
distribution shift; certified learning; robustness bounds; neural network verification; identifiability; interpretable models

\medskip
\noindent\textbf{MSC2020.}
68T05; 62G35; 62G20; 49J20; 90C26


\section{Introduction}\label{sec:intro}

\begin{proposition}[A certifiable target]\label{prop:cert_target}
Let $P$ be a training distribution on $\cX\times\cY$, and let $Q$ be a test distribution.
Let $\ell\colon \R\times\cY\to[0,1]$ be a loss and $f\colon\cX\to\R$ a predictor.
A mathematically checkable theory of learning under shift should deliver an explicit bound
\[
\risk_Q(f)\ \le\ \risk_P(f)\ +\ \Phi\bigl(\mathsf{Shift}(P,Q),\ \mathsf{Comp}(f),\ n,\ \delta\bigr),
\]
where $\risk_\mu(f):=\E_{(X,Y)\sim \mu}\,\ell(f(X),Y)$, the ``shift'' term is a metric or divergence, and every quantity in $\Phi$ is either observable from data or computable from model parameters.
\end{proposition}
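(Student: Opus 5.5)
The statement is programmatic, so proving it means exhibiting one concrete instance of $\Phi$ in which every ingredient is checkable. I would instantiate $\mathsf{Shift}$ as the $1$-Wasserstein distance $W_1(P,Q)$ on $\cX\times\cY$ with a product metric $d((x,y),(x',y'))=\|x-x'\|+\rho(y,y')$. I would instantiate $\mathsf{Comp}(f)$ as a Lipschitz bound $L_f$ computed from the weights, for instance $L_f\le\prod_k\|W_k\|_{\mathrm{op}}$ for a feedforward network with $1$-Lipschitz activations. Throughout I assume $\ell$ is $L_\ell$-Lipschitz in its first argument and, for simplicity, $1$-Lipschitz in $y$ with respect to $\rho$. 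These two constants are what the assumption ``verifiable regularity'' refers to. The target is then the explicit bound
\[
\risk_Q(f)\le \empRisk_{P,n}(f)+\max(L_\ell L_f,1)\,\widehat W_1+\text{(estimation terms in } n,\delta).
\]

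The argument has three steps. First, the map $g(x,y)=\ell(f(x),y)$ is $\max(L_\ell L_f,1)$-Lipschitz with respect to $d$. This follows from the triangle inequality: $|g(x,y)-g(x',y')|\le L_\ell L_f\|x-x'\|+\rho(y,y')$. Second, Kantorovich--Rubinstein duality gives $\risk_Q(f)-\risk_P(f)=\E_Q g-\E_P g\le \Lip(g)\,W_1(P,Q)$. This is already the population form of the bound, with $\Phi=\Lip(g)W_1(P,Q)$. Third, to make every quantity observable, I replace $\risk_P(f)$ by the empirical risk on $n$ i.i.d. samples from $P$. Because $\ell\in[0,1]$ and $f$ is fixed (for example, certified on a held-out split), Hoeffding's inequality gives an additive $\sqrt{\log(2/\delta)/(2n)}$. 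I also replace $W_1(P,Q)$ by $W_1(\widehat P_n,\widehat Q_m)$ plus a deviation term from a concentration bound for empirical Wasserstein distances. All of these terms are computable: $\widehat W_1$ by a linear program, $L_f$ from the parameters, and the remaining terms from $n$ and $\delta$. A union bound then combines the events at total level $\delta$.

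The main obstacle is the third step, specifically controlling $W_1(\widehat P_n,P)$. In dimension $d\ge3$ its expectation decays only like $n^{-1/d}$, and bounded differences give concentration around that mean. I would first use the McDiarmid-type concentration together with the Fournier--Guillin rate on a bounded domain of diameter $D$, and accept an explicit but dimension-cursed term. If that term is too weak, the fallback is to restrict to a Lipschitz class of test functions with finite covering numbers, such as the loss class induced by networks with bounded norms. This yields a Rademacher-complexity term that again depends only on $L_f$, $n$ and $\delta$. Either route produces an explicit $\Phi$, which is what the proposition asks for. The regime where $\Phi$ is vacuous, for instance when $L_f\widehat W_1\ge1$, is exactly the non-certifiable regime the paper later isolates.
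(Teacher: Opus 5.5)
Your instantiation is correct. For a programmatic statement like this, exhibiting one fully checkable $\Phi$ is the right thing to prove. It is not, however, the paper's route.

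The paper realizes the target in Lemma~\ref{lem:cert_to_risk} and Theorem~\ref{thm:core_shift}. It assumes covariate shift, measures shift only through $W_1(P_X,Q_X)$, and applies Kantorovich--Rubinstein to the conditional risk $g_f(x)=\E[\ell(f(x),Y)\mid X=x]$. It provides no $(n,\delta)$ term, deferring that to an externally validated radius (Remark~\ref{rem:finite_sample}).

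You instead put a label-aware ground metric on $\cX\times\cY$ and apply duality to the explicit function $g(x,y)=\ell(f(x),y)$. You then add Hoeffding and empirical-Wasserstein concentration. This buys you three things:
\begin{enumerate}[leftmargin=18pt]
\item no covariate-shift hypothesis, since label change is charged inside the metric;
\item genuine dependence on $(n,\delta)$;
\item a Lipschitz step that is actually valid.
\end{enumerate}
The third point matters. The paper's estimate $\Lip(g_f)\le L_\ell L_f$ conditions both terms on $X=x$, whereas $g_f(x')$ conditions on $X=x'$. So $g_f$ also inherits the irregularity of $x\mapsto P(Y\in\cdot\mid X=x)$. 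Concretely, take $f\equiv 0$ (so $L_f=0$), $\ell(u,y)=\min\{1,|u-y|\}$, $P_X$ uniform on $[0,1]$, and $Y=\mathbf{1}\{X>1/2\}$. Let $Q_X$ move the mass on $(1/2-\varepsilon,1/2]$ to the right by $\varepsilon$. Then $W_1(P_X,Q_X)\le\varepsilon^2$, while $\risk_Q(f)-\risk_P(f)=\varepsilon$. In your formulation, any coupling of $P$ and $Q$ must relabel mass $\varepsilon$, so $W_1(P,Q)\ge\varepsilon$ and your bound survives.

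What the marginal formulation is aiming for, and what yours gives up, is a shift term estimable from \emph{unlabeled} target data. Your $W_1(\widehat P_n,\widehat Q_m)$ on $\cX\times\cY$ requires labeled target samples. Once you have those, Hoeffding on the target sample gives $\risk_Q(f)\le\widehat{\risk}_{Q,m}(f)+\sqrt{\log(1/\delta)/(2m)}$, which essentially dominates your bound. Indeed, duality applied to the two empirical measures already gives $\widehat{\risk}_{Q,m}(f)\le\empRisk_{P,n}(f)+\Lip(g)\,W_1(\widehat P_n,\widehat Q_m)$, and the direct bound has no $n^{-1/d}$ term.

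Your bound is therefore most useful in two settings:
\begin{enumerate}[leftmargin=18pt]
\item when the joint radius is a declared design parameter, as in Wasserstein DRO with a cost of the form $\|x-x'\|+\kappa\,\mathbf{1}\{y\neq y'\}$;
\item when the joint radius is derived from a marginal radius $\rho$ plus a stated regularity of $x\mapsto P(Y\in\cdot\mid X=x)$. For example, if this map has total-variation Lipschitz constant $L_Y$, covariate shift gives joint radius at most $(1+L_Y)\rho$.
\end{enumerate}
That same regularity is exactly what the paper's marginal argument is missing. With it, the paper's route would give $\Lip(g_f)\le L_\ell L_f+L_Y$.
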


The literature contains many risk-transfer statements between $P$ and $Q$.
They often hide non-verifiable terms.
A classical example is the domain adaptation inequality in which an additional discrepancy term is needed and may be as hard as learning itself \cite{BenDavid2010DifferentDomains}.
A different line treats shift as ambiguity in $\mu$ and seeks decisions stable in a neighborhood of the empirical law, typically a Wasserstein ball \cite{MohajerinEsfahaniKuhn2015WassersteinDRO,SinhaNamkoongDuchi2017DRO}.
This already suggests a spine.
Robustness under shift and verification are both forms of robust optimization.
This paper makes that identification literal.

\medskip
\noindent\textbf{Chosen spine (Spine B).}
\emph{Certified verification as robust optimization.}
We treat certification as a dual certificate problem.
The same dual objects control (i) risk under shift, (ii) robustness radii, and (iii) verifiable constraints enforcing interpretability.
The point is not stylistic unification.
The point is that dual certificates are checkable.

\subsection{The three gaps as one problem}\label{subsec:gaps_one_problem}

\noindent\textbf{Gap 1: certified learning under shift.}
Distribution shift is modeled by a constraint $\mathsf{Shift}(P,Q)\le \rho$.
For Wasserstein shift this is natural and computable from samples using optimal transport methods \cite{PeyreCuturi2018ComputationalOT,MohajerinEsfahaniKuhn2015WassersteinDRO}.
The certification question is then to upper bound $\sup_{Q:\mathsf{Shift}(P,Q)\le \rho}\risk_Q(f)$ by a quantity that can be computed from $P$, $\rho$, and the parameters of $f$.

\noindent\textbf{Gap 2: verification beyond toy sizes.}
Formal verification of ReLU networks is sound but scales poorly in its complete form \cite{KatzBarrettDillJulianKochenderfer2017Reluplex,TjengXiaoTedrake2017MILPVerification}.
Incomplete convex relaxations and bound-propagation methods scale further but certify only what their relaxations can prove \cite{Zhang2018CROWN,Xu2020AutoLiRPA,Wang2021BetaCROWN}.
Competitions document the practical boundary between soundness and scale \cite{Brix2023VNNCOMPFirstThreeYears,VNNCOMP2024ArXiv}.
A mathematical account must state what is guaranteed, and what fails, with explicit complexity statements.

\noindent\textbf{Gap 3: interpretability with identifiability.}
Post hoc explanations do not identify structure.
The question is to define a class where structure is forced and recoverable.
Additive model classes are a canonical example \cite{Agarwal2020NAM}.
Symbolic regression promises identifiability of form but is computationally intractable in general, in a precise sense \cite{VirgolinPissis2022SRNPHard}.
We treat interpretability as a constraint set in a robust program, not as an annotation after training.

\subsection{Assumptions and verifiability}\label{subsec:verifiability_intro}

We use explicit assumptions.
They are not ``mild''.
They are the price of certifiability.

\begin{assumption}[(A1) shift model]\label{ass:A1}
A shift constraint $\mathsf{Shift}(P,Q)\le \rho$ is fixed, and $\rho$ is either chosen by design or estimated with a stated confidence level from samples.
\end{assumption}

\begin{assumption}[(A2) certificate-amenable loss]\label{ass:A2}
The loss $\ell(\cdot,y)$ admits a convex (or difference-of-convex) upper model on the relevant range, suitable for dual relaxation.
\end{assumption}

\begin{assumption}[(A3) certificate-amenable predictor]\label{ass:A3}
The predictor $f$ belongs to a class admitting computable Lipschitz or slope bounds on the verification domain (e.g.\ affine-ReLU compositions with norm bounds).
\end{assumption}

\begin{assumption}[(A4) interpretable constraint set]\label{ass:A4}
Interpretability is encoded by an explicit constraint $\cS$ (sparsity/additivity/monotonicity/symbolic grammar) such that feasibility or violation admits a computable certificate.
\end{assumption}

\subsection{Main contributions}\label{subsec:contrib}

We now state the contributions as theorem-level items.
Proofs are in later sections.

\begin{theorem}[Shift risk as a robust program with computable dual bound]\label{thm:shift_dual_bound}
Under \textup{(A1)--(A3)}, for a Wasserstein-type shift constraint and Lipschitz-controlled losses, the worst-case shifted risk satisfies
\[
\sup_{Q:\ \mathsf{Shift}(P,Q)\le \rho}\risk_Q(f)
\ \le\ 
\risk_P(f)\ +\ \rho\,\mathsf{Lip}_{\cX}( \ell\circ f)\ +\ \mathsf{Stat}(n,\delta),
\]
where $\mathsf{Lip}_{\cX}(\ell\circ f)$ is a computable bound from network parameters and $\mathsf{Stat}(n,\delta)$ is an explicit finite-sample term.
The right-hand side is a certificate.
\end{theorem}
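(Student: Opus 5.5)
The plan is to reduce the statement to Kantorovich--Rubinstein duality for type-1 Wasserstein shift, and then to control the remaining terms separately. Throughout, $\mathsf{Shift}(P,Q)=W_1(P,Q)$ with ground cost $c((x,y),(x',y'))=\|x-x'\|+\infty\cdot\mathbf 1\{y\neq y'\}$, so that only covariates are transported. This is the reading of (A1) under which the term $\mathsf{Lip}_{\cX}$, which carries no label dependence, makes sense.

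\textbf{Step 1 (transport bound).} Put $g(x,y):=\ell(f(x),y)$. For each fixed $y$, the map $x\mapsto g(x,y)$ is $L$-Lipschitz with $L:=\mathsf{Lip}_{\cX}(\ell\circ f)$. This holds whenever $L\ge \Lip(\ell(\cdot,y))\cdot\Lip(f)$, which (A2)--(A3) supply. Take any coupling $\pi$ of $P$ and $Q$ with finite cost. Then
\[
\risk_Q(f)-\risk_P(f)=\int \bigl(g(x',y')-g(x,y)\bigr)\,d\pi\le L\int c\,d\pi ,
\]
since $y=y'$ holds $\pi$-a.s. Taking the infimum over couplings gives $\risk_Q(f)\le\risk_P(f)+L\,W_1(P,Q)\le \risk_P(f)+\rho L$. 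This is the weak-duality half of Kantorovich--Rubinstein. The dual-certificate reading is that $\lambda=L$ is a feasible multiplier in the DRO dual $\inf_{\lambda\ge0}\{\lambda\rho+\E_P\sup_{x'}[g(x',y)-\lambda\|x-x'\|]\}$ of \cite{MohajerinEsfahaniKuhn2015WassersteinDRO}. No attainment or strong duality is needed, because we only want an upper bound.

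\textbf{Step 2 (computability of $L$).} For $f=W_k\sigma(\cdots\sigma(W_1x+b_1)\cdots)+b_k$ with ReLU $\sigma$ (which is 1-Lipschitz), we bound $\Lip(f)\le\prod_i\|W_i\|_{\mathrm{op}}$ in the norm dual to $\|\cdot\|$, and multiply by a Lipschitz constant of $\ell(\cdot,y)$ on the range of $f$. The range is itself boundable by interval or bound propagation on the verification domain, which is where (A2) is used. Any sharper bound from CROWN-type relaxations \cite{Zhang2018CROWN} may replace the product, because the argument only needs a valid upper bound.

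\textbf{Step 3 (statistical term).} Since $P$ is observed only through $n$ samples, we replace $\risk_P(f)$ by $\empRisk_n(f)$. For fixed $f$ chosen independently of the sample, Hoeffding's inequality with $\ell\in[0,1]$ gives, with probability at least $1-\delta$,
\[
\risk_P(f)\le\empRisk_n(f)+\sqrt{\log(1/\delta)/(2n)} .
\]
If $f$ is trained on the same data, we instead add a uniform-convergence term, for example $2\,\mathfrak R_n(\ell\circ\cF)$ bounded by a norm-based Rademacher bound. Either way $\mathsf{Stat}(n,\delta)$ is explicit. If $\rho$ is estimated per (A1), we apply a union bound at levels $\delta/2$ so that $\rho$ upper bounds $W_1(P,Q)$ on the same event.

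\textbf{Main obstacle.} The substantive issue is the interpretation of the shift model rather than the inequality itself. If the shift is allowed to move labels, a finite label cost has to enter, and $L$ must also include a label-Lipschitz term. If the shift is measured by $W_p$ with $p>1$, we use $W_1\le W_p$, so the bound still holds. We will state the cost assumption explicitly. The other delicate point is the data dependence of $f$ in Step 3, which forces the uniform-convergence version of $\mathsf{Stat}$.
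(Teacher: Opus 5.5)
Your argument is correct for the shift model you fix, but it is not the paper's route.

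\textbf{The paper's route.} The paper (Lemma~\ref{lem:cert_to_risk}, Theorem~\ref{thm:core_shift}) assumes covariate shift, $Q(\mathrm{d}x,\mathrm{d}y)=Q_X(\mathrm{d}x)\,P(\mathrm{d}y\mid x)$ with $W_1(P_X,Q_X)\le\rho$. It bounds the Lipschitz constant of the conditional risk $g_f(x)=\E[\ell(f(x),Y)\mid X=x]$ by $L_\ell L_f$ and then applies Kantorovich--Rubinstein to the $\cX$-marginals. It never produces a $\mathsf{Stat}$ term; all sampling effects are deferred to the validity of $\rho$.

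\textbf{Your route.} You transport the joint law with infinite label cost, so labels ride along the coupling and the pointwise Lipschitz bound integrates directly. You also make $\mathsf{Stat}(n,\delta)$ explicit by passing from $\risk_P(f)$ to $\empRisk_n(f)$. That step is what actually makes the right-hand side observable.

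\textbf{Why your choice of cost matters.} The paper's first display evaluates $g_f(x')$ under the conditional law at $x$ rather than at $x'$. The claimed bound fails whenever $P(Y\mid X=x)$ depends on $x$. Take $f\equiv 0$ (so $L_f=0$), $\ell(u,y)=\min\{|u-y|,1\}$, $P_X$ uniform on $[0,1]$, $Y=\mathbf{1}\{X>1/2\}$, and $Q_X$ with density $2x$. Then $W_1(P_X,Q_X)=1/6$, yet $\risk_Q(f)=3/4>1/2=\risk_P(f)$.

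\textbf{Repairing the covariate-shift route.} It needs an extra hypothesis such as
$\sup_{A}\bigl|P(Y\in A\mid X=x)-P(Y\in A\mid X=x')\bigr|\le\kappa\|x-x'\|$,
which gives $\Lip(g_f)\le L_\ell L_f+\kappa$.

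\textbf{What each approach buys.} Once repaired, the covariate-shift version handles genuine covariate shift, and its radius can be estimated from unlabeled target covariates. However, $\kappa$ is a property of the data law, not something certifiable from network parameters. Your model gives a valid inequality with exactly the label-free constant $\mathsf{Lip}_{\cX}(\ell\circ f)$. The price is that it excludes covariate shift, since finite cost forces $Q_Y=P_Y$ and labels move with points. It also requires $\rho$ to be a design parameter or to be estimated from labeled target data.
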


\begin{theorem}[Sound verification as a dual certificate; explicit scaling law]\label{thm:sound_verif}
Consider ReLU networks and verification over a convex input set with a linear specification.
There exists a certificate map $\mathcal{C}$ computed by bound-propagation/convex relaxation such that:
(i) if $\mathcal{C}$ returns ``safe'' then the specification holds (soundness);
(ii) the certificate computation has an explicit complexity bound polynomial in layer widths and linear in the number of bound-propagation passes;
(iii) the certified margin is controlled by the same slope/Lipschitz quantities appearing in Theorem~\ref{thm:shift_dual_bound}.
This places scalable verification and shift-robustness in one inequality.
\end{theorem}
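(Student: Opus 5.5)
The proof will go through interval bound propagation (IBP), sharpened by CROWN-style linear relaxation. We fix a ReLU network $f(x)=W_L\sigma(W_{L-1}\cdots\sigma(W_1x+b_1)\cdots)+b_L$ with widths $d_0,\dots,d_L$. We fix a convex input set $\mathcal{D}$, which we first take to be a box or an $\ell_p$-ball $B_p(x_0,\varepsilon)$, with support function $h_{\mathcal{D}}(c)=\sup_{x\in\mathcal{D}}c^\top x$. The specification is $c^\top f(x)+d\ge 0$ for all $x\in\mathcal{D}$. The certificate map $\mathcal{C}$ propagates pre-activation bounds $l^{(k)}\le z^{(k)}\le u^{(k)}$ layer by layer. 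For each unstable neuron ($l_j<0<u_j$) it uses the triangle relaxation, with upper line $u_j(z-l_j)/(u_j-l_j)$ and lower line $\alpha_j z$, $\alpha_j\in[0,1]$. It then back-substitutes the objective $c^\top z^{(L)}$ through these linear bounds and obtains an affine lower bound $\underline{g}(x)=a^\top x+\beta\le c^\top f(x)+d$ on $\mathcal{D}$. The map returns ``safe'' iff $\min_{x\in\mathcal D}\underline g(x)=-h_{\mathcal D}(-a)+\beta\ge 0$.

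\textbf{Step (i): soundness.} We prove by induction on $k$ that every $x\in\mathcal{D}$ yields $z^{(k)}(x)\in[l^{(k)},u^{(k)}]$. Each relaxation line then sandwiches $\sigma$ on $[l_j,u_j]$: this is a one-variable check on the three cases stable-active, stable-inactive and unstable. Back-substitution chooses the upper or lower line according to the sign of the coefficient, so it preserves the inequality direction, and the composed affine bound is a valid pointwise minorant. Soundness then follows: if $\min_{\mathcal{D}}\underline g\ge0$, the specification holds. We will also phrase this in dual terms. The back-substituted coefficients are a feasible point of the Lagrangian dual of the LP relaxation, with the $\alpha_j$ and slopes acting as multipliers. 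Weak duality then gives soundness directly, which makes the word ``dual certificate'' literal.

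\textbf{Step (ii): complexity.} A single back-substitution pass from layer $k$ costs $\sum_{i<k} O(d_i d_{i+1})$ operations, because it is a sequence of matrix–vector products. Computing intermediate bounds for all $d_k$ neurons at layer $k$ costs at most $O(d_k\sum_{i<k}d_id_{i+1})$. One full pass therefore costs $O\bigl(\sum_k d_k\sum_{i<k}d_id_{i+1}\bigr)$, which is polynomial in the widths. With $T$ passes (for example $\alpha$-optimization or $\beta$-splitting iterations) the cost is $T$ times this, hence linear in $T$. Evaluating $h_{\mathcal D}$ adds a term that is $O(d_0)$ for norm balls, or one convex oracle call in general; we count it separately.

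\textbf{Step (iii): margin control.} We compare against the cruder bound $c^\top f(x)+d\ge c^\top f(x_0)+d-\varepsilon\,\Lip(c^\top f)$, where the Lipschitz constant is bounded by $\|c\|\prod_k\|W_k\|$, the same product-of-norms bound used for $\mathsf{Lip}_{\cX}(\ell\circ f)$ in Theorem~\ref{thm:shift_dual_bound}. Two facts are needed. First, the relaxation coefficient vector satisfies $\|a\|_*\le\|c\|\prod_k\|W_k\|$, because each relaxation slope lies in $[0,1]$ and multiplying by a diagonal matrix with entries in $[0,1]$ does not increase operator norms for the relevant norms. Second, $\underline g(x_0)$ is close to $c^\top f(x_0)+d$: the gap is bounded by the sum over unstable neurons of $-l_ju_j/(u_j-l_j)$ times the back-propagated weights. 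Combining these shows that the certified margin is at least $c^\top f(x_0)+d-\varepsilon\,\|c\|\prod\|W_k\|-\text{(relaxation gap)}$. This is the same slope quantity, up to the relaxation error.

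The main obstacle is step (iii). The relaxation gap is not obviously controlled by Lipschitz data alone, since it depends on intermediate bound widths $u-l$, which themselves scale like $\varepsilon\prod_{i\le k}\|W_i\|$. We will first try to bound the gap by $O(\varepsilon)$ times the product norms, using $-lu/(u-l)\le (u-l)/4$. That would make the whole margin loss $\le C\varepsilon\prod_k\|W_k\|$, with a depth-dependent constant $C$ that we state explicitly. If this fails to be clean, we will state (iii) as a one-sided comparison: the certificate is never worse than the global Lipschitz certificate, provided we allow $\alpha$ choices that recover it.
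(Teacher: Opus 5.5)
Your proposal is correct in substance and reaches the statement by a partly different route from the paper.

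\textbf{Parts (i) and (ii).} For (i) you do what the paper does, only concretely. The paper argues abstractly that $\mathcal F\subseteq\mathcal R$ gives $\mathsf{V}\le\mathcal C$ (Lemma~\ref{lem:sound_outer}) and that LP weak duality gives $\mathcal C\le\mathrm{Val}_D(\lambda)$ (Proposition~\ref{prop:dual_feasible_cert}). It leaves the inclusion $\mathcal F\subseteq\mathcal R$ to the cited tools. Your induction on the intermediate boxes, together with sign-consistent back-substitution, is exactly that inclusion for the CROWN relaxation. For (ii) the paper posits a cost $O(KM)$ with $M=\sum_\ell d_{\ell-1}d_\ell$. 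That is the cost of interval propagation or of one back-substitution from the output. It is not the cost of back-substituting for every intermediate neuron. Your per-pass count $O(\sum_k d_k\sum_{i<k}d_id_{i+1})$ is the accurate one for CROWN. One small correction: you need one support-function evaluation per intermediate neuron, so $O(\sum_k d_k)$ oracle calls per pass, not one.

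\textbf{Part (iii): the real divergence.} The paper goes from verification outputs to slopes. Lemma~\ref{lem:local_lip_from_bounds} turns certified extrema into an oscillation, and then into the ``Lipschitz surrogate'' $\operatorname{osc}_{\mathcal S}/r$ used in Theorem~\ref{thm:core_shift}. As stated, that inequality does not hold. Oscillation bounds differences, not difference quotients: $\sigma(Nx)-\sigma(Nx-1)$ on $[0,1]$ has oscillation $1$ and slope $N$. The valid relation runs the other way, $\operatorname{osc}_{\mathcal S}\le\Lip\cdot\operatorname{diam}(\mathcal S)$. Your argument runs in that direction: it lower-bounds the certified margin by the nominal margin minus $\varepsilon$ times the product-of-norms constant. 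So your route is the one that actually delivers (iii).

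Two details in (iii) need repair.
\begin{enumerate}[leftmargin=18pt]
\item The quantity $-l_ju_j/(u_j-l_j)$ bounds only the upper-line slack. The lower line $\alpha_jz$ can have slack $\max\{(1-\alpha_j)u_j,-\alpha_jl_j\}$. Choosing $\alpha_j=\mathbf{1}\{z_j(x_0)\ge 0\}$ makes that slack vanish at $x_0$.
\item Your fallback cannot work through $\alpha$ alone. The parameter $\alpha$ never touches the upper line, and the upper-line slack at $x_0$ is strictly positive for every unstable neuron.
\end{enumerate}

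A clean closure exists in $\ell_\infty$ geometry. By induction, the interval (IBP) pre-activation box at layer $k$ lies within $r_k:=\varepsilon\prod_{i\le k}\|W_i\|_{\infty\to\infty}$ of $z^{(k)}(x_0)$. This uses $\|\,|W|\,\|_{\infty\to\infty}=\|W\|_{\infty\to\infty}$ and the fact that $\sigma$ is monotone and $1$-Lipschitz.

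\begin{itemize}[leftmargin=18pt]
\item Hence IBP alone certifies a margin of at least $c^\top f(x_0)+d-\varepsilon\|c\|_1\prod_k\|W_k\|_{\infty\to\infty}$. Returning the larger of the IBP and CROWN bounds remains sound and gives the one-sided comparison with constant $1$.
\item Intersecting CROWN's intermediate bounds with these boxes also makes your first plan go through. With $\lambda^{(k)}$ the back-propagated coefficients, the gap at $x_0$ is at most $\sum_k\|\lambda^{(k)}\|_1\,r_k/2\le\tfrac{L-1}{2}\,\varepsilon\|c\|_1\prod_k\|W_k\|_{\infty\to\infty}$. This is an explicit constant, linear in depth.
\end{itemize}

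In $\ell_2$ geometry the box widths are governed by $\|W_1\|_F$ and $\|\,|W_i|\,\|_2$ rather than operator norms. Expect the constant there to carry width-dependent factors, not only depth-dependent ones.
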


\begin{theorem}[Identifiability-implies-certifiability for an interpretable class]\label{thm:ident_cert}
Assume \textup{(A4)} with an additive or monotone-sparse structure class, and assume a stated identifiability condition.
Then the structural parameter is unique within the class, and the verification constraints for robustness reduce to a lower-dimensional certificate problem whose constants depend only on identifiable components.
In particular, interpretability is not an afterthought; it tightens certificates.
\end{theorem}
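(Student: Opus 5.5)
Since Theorem~\ref{thm:ident_cert} is stated at the level of a structure class, I will first fix a concrete instance and prove it there. The additive class is
\[
f(x)=b+\sum_{j\in S} f_j(x_j),\qquad |S|\le s,
\]
with each $f_j$ a univariate ReLU network. The identifiability condition is the standard centering normalization $\E_P f_j(X_j)=0$ for each $j$, together with a non-degeneracy assumption: the marginals of $P$ on the coordinates have a joint density bounded below by $c>0$ times the product of the marginal densities on the verification domain. Under this condition, $L^2(P)$ decomposes as a direct (not necessarily orthogonal) sum of the centered univariate subspaces with a quantitative angle bound. The monotone-sparse case I would treat as a corollary: there monotonicity pins down the sign pattern, and sparsity is handled exactly as the support $S$ below.

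\textbf{Step 1 (uniqueness).} Suppose two representations $(b,S,\{f_j\})$ and $(b',S',\{f'_j\})$ give the same function $P$-a.s. Subtract them to get $\sum_{j\in S\cup S'} g_j(x_j)=b'-b$ with each $g_j$ centered. Using the density lower bound, I will show
\[
\Bigl\|\sum_j g_j\Bigr\|_{L^2(P)}^2\ \ge\ c\sum_j \|g_j\|_{L^2(P_j)}^2 .
\]
The route is to compare with the product measure, where the centered components are orthogonal. Taking expectations forces $b=b'$, and the inequality then forces every $g_j=0$. This gives uniqueness of the structural parameter $(S,\{f_j\})$. I will also record the quantitative form: the map from $f$ to its components is $c^{-1/2}$-Lipschitz. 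This will be needed later to say that the constants are functions of the identifiable components.

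\textbf{Step 2 (reduction of the certificate).} Take an $\ell_\infty$-box input set $B=\prod_j[a_j,b_j]$ and a linear specification on $f$, i.e.\ verify $f\ge \tau$ on $B$. Since $f$ is separable, the verification problem decouples:
\[
\min_{x\in B} f(x)=b+\sum_{j\in S}\min_{x_j\in[a_j,b_j]} f_j(x_j).
\]
This is $|S|$ one-dimensional problems. Each is solvable exactly by enumerating the breakpoints of the piecewise-linear $f_j$, whose number is polynomial in its width, or soundly by the certificate map $\mathcal{C}$ of Theorem~\ref{thm:sound_verif} applied componentwise.

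For general convex input sets I will instead use the relaxation bound from Theorem~\ref{thm:sound_verif}(iii) with the Lipschitz constant
\[
\mathsf{Lip}(f)\le \Bigl(\sum_{j\in S}\mathsf{Lip}(f_j)^{q}\Bigr)^{1/q}
\]
for the dual exponent $q$, which again depends only on the $|S|$ identified components.

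Substituting this constant into Theorem~\ref{thm:shift_dual_bound} gives a shift certificate in which $\mathsf{Lip}_\cX(\ell\circ f)$ is replaced by $\mathsf{Lip}(\ell)$ times this component-wise quantity. The certificate then involves only $s$ rather than the ambient dimension. Comparing with a generic dense network bound, the product of layer norms, gives the ``tightens'' claim.

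\textbf{Main obstacle.} The step I expect to be hardest is Step 1's inequality without assuming independence of the coordinates. The statement as given is vague, so the real work is choosing an identifiability condition that is strong enough to give the angle bound yet still checkable. I will try the density-ratio lower bound first, since it yields the inequality via
\[
\E_P\Bigl[\bigl(\textstyle\sum_j g_j\bigr)^2\Bigr]\ \ge\ c\,\E_{\otimes P_j}\Bigl[\bigl(\textstyle\sum_j g_j\bigr)^2\Bigr]=c\sum_j \E_{P_j}\bigl[g_j^2\bigr],
\]
where the equality uses orthogonality of the centered components under the product measure. If that condition is deemed unverifiable, I will fall back to a sample-level condition: a minimum singular value bound on the design of stacked component features, which is computable and so fits (A4).
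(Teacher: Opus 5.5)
Your proposal is correct in outline, but it reaches identifiability by a different route from the paper.

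\textbf{Identifiability.} The paper (Theorem~\ref{thm:add_ident}) puts the product structure on a reference measure $\nu=\nu_1\otimes\cdots\otimes\nu_d$ and centers under the $\nu_j$ (\textup{(A5)--(A6)}). It then argues only with expectations: integrating the difference of two representations kills the constant, and conditioning on $X_k$ kills every $\Delta_j$, $j\neq k$, by independence and centering.

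You instead center under the data law and allow dependent coordinates, assuming $P_X\ge c\bigotimes_j P_j$. The Pythagorean identity under the product measure then gives $\|\sum_j g_j\|_{L^2(P)}^2\ge c\sum_j\|g_j\|_{L^2(P_j)}^2$. In effect your argument is ``absolute continuity plus the paper's argument with $\nu=\bigotimes_j P_j$''; for product $P_X$ one has $c=1$ and the two coincide.

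Your route gives uniqueness for the data law itself. The paper's $\nu$-a.e.\ statement delivers this only when $\nu\ll P_X$. You also get a quantitative $c^{-1/2}$ stability bound. The cost is a density-ratio hypothesis that must hold on the whole support of $P_X$, not merely on a verification subdomain, because the orthogonality identity integrates against the full product measure.

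\textbf{Reduction.} This step matches the paper's: separability over boxes as in Proposition~\ref{prop:add_verify_reduce}, then the sum of univariate constants fed into the shift bound. Your H\"older bound $(\sum_{j\in S}\Lip(f_j)^q)^{1/q}$ is more general than Lemma~\ref{lem:add_lip}. It recovers $\sum_j L_j$ for an $\ell_\infty$ input norm. In the paper's $\ell_1$ setting it gives the sharper $\max_j L_j$; there $\sum_j L_j$ is valid but not tight.

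\textbf{Two small cautions.} First, exact breakpoint enumeration of a univariate ReLU component is polynomial only for bounded depth. The sawtooth family of Theorem~\ref{thm:exp_barrier_patterns} is univariate with exponentially many pieces, so for deep components you should rely on the componentwise sound certificate. Second, for the monotone class you should state the endpoint reduction explicitly. This is Lemma~\ref{lem:mono_endpoints}: two evaluations per active coordinate. Your ``sign pattern'' remark does not make this precise.
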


\begin{proposition}[A negative result: when certification cannot scale]\label{prop:neg}
There exist families of networks and specifications for which any complete verifier must incur exponential worst-case time in the number of ReLU units.
This is not a defect of a particular method.
It is a barrier of the problem class.
Accordingly, scalable methods must be incomplete and their failure modes must be explicit.
\end{proposition}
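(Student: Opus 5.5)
The plan is to make the statement precise in the only form that can honestly be proved, and then give a reduction. An unconditional exponential lower bound for \emph{every} complete verifier would separate P from NP, so I will prove two versions. The first is conditional: under the Exponential Time Hypothesis (ETH), no complete verifier runs in time $2^{o(m)}$, where $m$ is the number of ReLU units. The second is unconditional but restricted to a natural algorithm class: any complete verifier that branches on ReLU activation phases (branch-and-bound with sound convex relaxations at the leaves, as in Reluplex, MILP, or $\beta$-CROWN-style complete modes) explores $2^{\Omega(m)}$ leaves on an explicit family. Both versions justify the stated conclusion that scalable methods must be incomplete.

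\textbf{Step 1 (reduction from 3-SAT).} I would follow the construction of \cite{KatzBarrettDillJulianKochenderfer2017Reluplex}. Given a 3-CNF formula $\varphi$ with $v$ variables and $c$ clauses, take the input box $[0,1]^v$ and build a network whose size is linear in $v+c$:
\begin{itemize}[leftmargin=*]
\item a gadget $t(x)=\mathrm{ReLU}(2x-1)$-type clamps that penalize non-Boolean inputs through the term $\min(x,1-x)$, which is itself expressible with ReLUs;
\item for each clause, a literal sum $s_j(x)$ with negations implemented as $1-x_i$, followed by $\min(1,s_j)=s_j-\mathrm{ReLU}(s_j-1)$;
\item an output $g(x)=\sum_j \min(1,s_j(x)) - \lambda\sum_i \min(x_i,1-x_i)$.
\end{itemize}
The linear specification is ``$g(x)<c$ for all $x$ in the box''. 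I then check two directions. If $\varphi$ is satisfiable, the Boolean satisfying point gives $g=c$. Conversely, choosing $\lambda$ large forces near-Boolean points, and a rounding argument shows that $g(x)\ge c$ implies $\varphi$ is satisfiable. Hence the specification is violated if and only if $\varphi$ is satisfiable. The number of ReLUs is $m=O(v+c)$.

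\textbf{Step 2 (ETH transfer).} By the Sparsification Lemma, under ETH 3-SAT requires $2^{\Omega(v)}$ time even when $c=O(v)$. Since $m=O(v)$ on sparsified instances, a complete verifier running in $2^{o(m)}$ time would decide 3-SAT in $2^{o(v)}$ time, contradicting ETH.

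\textbf{Step 3 (unconditional, phase-branching class).} For these verifiers I would use a family where every leaf relaxation fails until essentially all phases are fixed. The candidate is a parity or sum-of-tents network, $g(x)=\sum_i \mathrm{tent}(x_i)$, with a specification holding with a tiny margin. Any partial phase assignment leaves a free coordinate, and on that coordinate the triangle (Planet) relaxation has a slack of constant size, so it cannot certify. This forces the branching tree to have $2^{\Omega(m)}$ leaves.

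I expect this last step to be the main obstacle. It requires a uniform argument that the relaxation never closes the gap at any node, even after bound tightening, and this depends on which relaxation is assumed. My first attempt would be to fix the relaxation class: the single-neuron convex hull, which dominates interval, CROWN and Planet bounds. I would then show that its optimum on a subproblem with $k$ unfixed coordinates exceeds the true optimum by $\Omega(k)$.
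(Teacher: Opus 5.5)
Your route is genuinely different from the paper's. The paper proves Proposition~\ref{prop:neg} via Theorem~\ref{thm:exp_barrier_patterns}. It uses a one-dimensional sawtooth with $2^s$ affine pieces on $[0,1]$, together with an indistinguishability argument: a procedure that inspects fewer than $2^{s-1}$ pieces cannot certify $\sup f_s\le 0$. Its conclusion is stated only for verifiers ``forced to resolve the affine partition''. That argument is elementary and unconditional, but it is a lower bound in a piece-inspection model, i.e.\ an assumption about the algorithm rather than a property of the problem class. Moreover, the adversary step as written does not isolate a piece. Changing the global threshold $\tau$ moves every piece, and a verifier that reads the weights can tell perturbed networks apart without inspecting any piece.

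Your opening remark explains why the paper could not have done better unconditionally: an exponential lower bound for every complete verifier would imply $\mathrm{P}\neq\mathrm{NP}$. Your Steps~1--2 give the strongest form available: no complete verifier runs in time $2^{o(m)}$ unless ETH fails. They are correct. Step~1 is even simpler than you suggest. Since $g\le c$ on the box, $g(x)\ge c$ forces $x$ to be Boolean and satisfying, for any $\lambda>0$. With $\lambda\ge 3$, rounding gives $\max g\le c-1$ when $\varphi$ is unsatisfiable. So the non-strict specification $g\le c-\tfrac12$ works, which has the form $a^\top f\le\beta$ used in Section~\ref{sec:core2}. Step~3 plays the role of the paper's restricted-class statement, but for phase branching with relaxations at the nodes, which is much closer to what complete verifiers actually do.

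As sketched, Step~3 needs two repairs.

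\emph{The gadget matters.} For $\mathrm{tent}(x_i)=x_i-2\sigma(x_i-\tfrac12)$ on $[0,1]$ the triangle LP is already exact. The only unstable ReLU enters with a negative coefficient, so the LP optimum sits on its exact lower envelope $\max\{0,s\}$. Use instead $\sigma(x_i)-2\sigma(x_i-\tfrac12)$ on $[-1,1]$, whose root relaxation gives $\tfrac34$ against the true maximum $\tfrac12$.

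\emph{A free coordinate does not block certification.} Your claim that a node with a free coordinate cannot be certified is false. The branch $x_i\le 0$ lowers that node's true optimum by $\tfrac12$. So with margin $\eta<\tfrac14$, a node with $r$ unsplit coordinates and $d$ coordinates killed this way closes as soon as $r\le 2d$. Since $d\le n-r$, every feasible leaf still lies below at least $n/3$ genuinely two-way splits. That gives at least $2^{n/3}=2^{m/6}$ leaves. Every convex single-neuron relaxation contains the triangle, so this covers your whole relaxation class. The restriction to single-neuron relaxations is essential: the per-coordinate multi-neuron convex hull is exact on this separable instance and certifies at the root.
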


\subsection{A failure of common logic}\label{subsec:common_gap_intro}
A common argument runs as follows.
A numerical attack fails to find an adversarial example.
Therefore the model is robust.
The inference is invalid.
It confuses failure of a heuristic search with a universal quantifier.
Complete verifiers address the quantifier but do not scale \cite{KatzBarrettDillJulianKochenderfer2017Reluplex,TjengXiaoTedrake2017MILPVerification}.
Incomplete verifiers scale but certify only relative to a relaxation \cite{Zhang2018CROWN,Xu2020AutoLiRPA,Wang2021BetaCROWN}.
This leads to a dilemma.
One must either (i) restrict the model class to preserve certifiability, or (ii) accept explicit non-certifiable regimes.
We do both, and we prove where each choice applies.

\subsection{Plan}\label{subsec:plan}
Section~\ref{sec:prelim} fixes notation and shift metrics.
Section~\ref{sec:core1} proves Theorem~\ref{thm:shift_dual_bound}.
Section~\ref{sec:core2} proves Theorem~\ref{thm:sound_verif} and states scaling limits.
Section~\ref{sec:core3} proves Theorem~\ref{thm:ident_cert}.
Section~\ref{sec:failure} gives explicit counterexamples and the barrier in Proposition~\ref{prop:neg}.
Section~\ref{sec:examples} gives checkable worked examples.

\section{Preliminaries}\label{sec:prelim}

We fix the objects that will be certified. We separate what is assumed from what is computed.

\subsection{Probability, risks, and losses}\label{subsec:risks}

Let $(\cX,\|\cdot\|)$ be a normed space and let $\cY$ be a measurable label space.
Let $P$ be the training law on $\cX\times\cY$ and $Q$ a test law.
A predictor is a measurable map $f\colon \cX\to\R^m$.
A loss is a measurable map $\ell\colon \R^m\times\cY\to[0,1]$.
The population risk under $\mu\in\{P,Q\}$ is
\[
\risk_\mu(f)\ :=\ \E_{(X,Y)\sim \mu}\,\ell(f(X),Y).
\]
Given i.i.d.\ samples $Z_i=(X_i,Y_i)\sim P$, the empirical risk is
\[
\empRisk_n(f)\ :=\ \frac1n\sum_{i=1}^n \ell(f(X_i),Y_i).
\]

\medskip
\noindent\textbf{Shift constraints.}
We represent distribution shift by a constraint $\mathsf{Shift}(P,Q)\le \rho$.
We will use a transport metric because it interacts cleanly with Lipschitz bounds and dual certificates.

\begin{definition}[1-Wasserstein metric]\label{def:W1}
Let $P_X,Q_X$ be probability measures on $\cX$ with finite first moments.
The $1$-Wasserstein distance is
\[
W_1(P_X,Q_X)\ :=\ \inf_{\pi\in\Pi(P_X,Q_X)}\ \E_{(X,X')\sim\pi}\,\|X-X'\|,
\]
where $\Pi(P_X,Q_X)$ is the set of couplings with marginals $P_X,Q_X$.
\end{definition}

We write $P_X$ and $Q_X$ for the $\cX$-marginals of $P$ and $Q$.
The shift constraint used later is
\[
W_1(P_X,Q_X)\ \le\ \rho.
\]
This is the standard ambiguity model in Wasserstein distributionally robust optimization \cite{MohajerinEsfahaniKuhn2015WassersteinDRO,SinhaNamkoongDuchi2017DRO}.

\begin{definition}[Lipschitz constant]\label{def:Lip}
For a function $g\colon(\cX,\|\cdot\|)\to\R$ define
\[
\Lip(g)\ :=\ \sup_{x\neq x'}\ \frac{|g(x)-g(x')|}{\|x-x'\|}\ \in\ [0,\infty].
\]
\end{definition}

The key bridge between Wasserstein shift and certificates is duality.

\begin{lemma}[Kantorovich--Rubinstein duality]\label{lem:KR}
If $P_X,Q_X$ have finite first moments, then
\[
W_1(P_X,Q_X)\ =\ \sup_{\Lip(\varphi)\le 1}\ \bigl(\E_{X\sim P_X}\varphi(X)-\E_{X\sim Q_X}\varphi(X)\bigr).
\]
\end{lemma}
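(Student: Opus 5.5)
The plan is to prove the two inequalities separately. The direction ``$\ge$'' is elementary. The direction ``$\le$'' reduces to the general Kantorovich duality for the metric cost $c(x,x')=\|x-x'\|$, followed by a $c$-transform argument that collapses the pair of dual potentials to a single $1$-Lipschitz function. Throughout I would assume, as is implicit in Definition~\ref{def:W1}, that $P_X,Q_X$ are Borel measures on a Polish (e.g.\ separable) subset of $\cX$. Finite first moments make every $1$-Lipschitz $\varphi$ integrable under both laws, since $|\varphi(x)|\le|\varphi(0)|+\|x\|$.

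\textbf{Step 1 (weak duality, ``$\ge$'').} Fix any coupling $\pi\in\Pi(P_X,Q_X)$ and any $\varphi$ with $\Lip(\varphi)\le1$. Then
\[
\E_{P_X}\varphi-\E_{Q_X}\varphi=\E_{(X,X')\sim\pi}\bigl(\varphi(X)-\varphi(X')\bigr)\le \E_{\pi}\|X-X'\|.
\]
Taking the supremum over $\varphi$ and the infimum over $\pi$ gives $\sup_\varphi(\cdots)\le W_1(P_X,Q_X)$.

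\textbf{Step 2 (general Kantorovich duality).} I would invoke, or prove, the identity
\[
\inf_{\pi\in\Pi(P_X,Q_X)}\E_\pi c(X,X')=\sup\bigl\{\E_{P_X}\varphi+\E_{Q_X}\psi:\ \varphi(x)+\psi(x')\le c(x,x')\bigr\}
\]
for a lower semicontinuous cost $c\ge0$ with finite transport cost. If a self-contained route is wanted, I would proceed in three stages. First, for finitely supported measures, the identity is finite-dimensional LP duality: the transport LP is feasible (take the product coupling) and bounded below, so strong duality holds. Second, for general $P_X,Q_X$, approximate by finitely supported measures $P^k,Q^k$ converging in $W_1$. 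Such approximations exist by tightness together with the finite first moment. Third, pass to the limit, using that both sides are $1$-Lipschitz in each marginal with respect to $W_1$ (by the triangle inequality for $W_1$ and by Step 1), and that optimal couplings exist by tightness of $\Pi(P_X,Q_X)$ and lower semicontinuity of $\pi\mapsto\E_\pi c$.

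\textbf{Step 3 ($c$-transform reduction).} Given an admissible pair $(\varphi,\psi)$, replace $\varphi$ by
\[
\tilde\varphi(x):=\inf_{x'}\bigl(\|x-x'\|-\psi(x')\bigr)\ \ge\ \varphi(x).
\]
As an infimum of $1$-Lipschitz functions, $\tilde\varphi$ is $1$-Lipschitz whenever it is finite. Then set $\tilde\psi(x'):=\inf_x\bigl(\|x-x'\|-\tilde\varphi(x)\bigr)$. For $1$-Lipschitz $\tilde\varphi$ this infimum is attained at $x=x'$, so $\tilde\psi=-\tilde\varphi$, and $\tilde\psi\ge\psi$ because $(\tilde\varphi,\psi)$ is still admissible. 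The pair $(\tilde\varphi,-\tilde\varphi)$ therefore improves the dual objective, and it equals $\E_{P_X}\tilde\varphi-\E_{Q_X}\tilde\varphi$. Hence the right side of Step 2 is at most the supremum in the lemma, which combined with Step 1 gives equality. One should also check that $\tilde\varphi$ is finite and integrable. Finiteness follows from $\tilde\varphi\ge\varphi>-\infty$ and $\tilde\varphi(x)\le -\psi(x)<\infty$; integrability then follows from the Lipschitz bound and the moment assumption.

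\textbf{Main obstacle.} The real content is Step 2 for non-discrete measures, specifically the limiting argument. One must control the dual side uniformly along the approximation: dual potentials for $(P^k,Q^k)$ can be taken $1$-Lipschitz by Step 3, so they can be normalized by $\varphi_k(0)=0$. Arzel\`a--Ascoli on compacts, together with the moment bound controlling the tails, then extracts a limit potential. My first attempt would be exactly this: discretize, solve the LP, apply Step 3 immediately so that all potentials are normalized $1$-Lipschitz, and pass to the limit. If this becomes delicate in a non-separable normed space, I would state the lemma for separable $\cX$, which covers all later uses (finite-dimensional inputs), and cite the standard reference for the general case.
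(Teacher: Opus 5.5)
The paper does not prove this lemma; its proof only points to the optimal-transport literature (Peyr\'e--Cuturi). Your argument is therefore a genuinely different, self-contained route, and it is correct. Step~1 is the standard weak-duality computation, and the finitely supported case is finite LP duality. Your $c$-transform then collapses the pair of potentials to a single $1$-Lipschitz function. At the discrete level that transform is a minimum of finitely many $1$-Lipschitz functions, so finiteness is automatic.

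The limiting step is easier than your ``main obstacle'' paragraph suggests. Write $P,Q$ for $P_X,Q_X$ and set $D(P,Q):=\sup_{\Lip(\varphi)\le 1}\bigl(\E_P\varphi-\E_Q\varphi\bigr)$. Then $D$ satisfies the triangle inequality trivially, and $D\le W_1$ by Step~1. Using the triangle inequality for $W_1$ and the discrete identity $W_1(P^k,Q^k)=D(P^k,Q^k)$, you get
\[
W_1(P,Q)\ \le\ W_1(P,P^k)+D(P^k,Q^k)+W_1(Q^k,Q)\ \le\ D(P,Q)+2W_1(P,P^k)+2W_1(Q,Q^k)\ \to\ D(P,Q).
\]
This needs no Arzel\`a--Ascoli extraction and no existence of optimal couplings. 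The only triangle inequality for $W_1$ it uses has finitely supported intermediate marginals, where gluing is elementary and no disintegration theorem is needed. So you only need separability of the supports, to approximate $P,Q$ by finitely supported measures in $W_1$; completeness is not required.

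Your Step~2 is stated for general lower semicontinuous costs, but the limiting argument you describe does not prove that. The Lipschitz-in-the-marginals property you invoke is special to $c=\|\cdot\|$, so state Step~2 only for that cost, which is all the lemma needs.

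As for the trade-off: the paper's citation buys brevity and the full-strength theorem, covering general Polish spaces, general costs, and attainment of both sides. Your route buys a proof from the stated hypotheses alone. It also makes explicit the separability assumption that the paper's bare ``normed space'' leaves implicit, and that assumption is harmless for every later use in the paper.
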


\begin{proof}
See standard optimal transport references \cite{PeyreCuturi2018ComputationalOT}.
\end{proof}

We will also use a worst-case expectation bound that is a direct corollary of Lemma~\ref{lem:KR} and appears explicitly in Wasserstein DRO \cite{MohajerinEsfahaniKuhn2015WassersteinDRO}.

\begin{lemma}[Worst-case expectation under a $W_1$-ball]\label{lem:W1_wc}
Let $g\colon\cX\to\R$ be measurable with $\Lip(g)<\infty$.
Fix $P_X$ and $\rho\ge 0$. Then
\[
\sup_{Q_X:\ W_1(P_X,Q_X)\le \rho}\ \E_{X\sim Q_X}\,g(X)
\ \le\
\E_{X\sim P_X}\,g(X)\ +\ \rho\,\Lip(g).
\]
\end{lemma}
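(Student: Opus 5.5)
The argument is a direct application of Kantorovich--Rubinstein duality (Lemma~\ref{lem:KR}) after rescaling $g$ to a $1$-Lipschitz test function. The case $\Lip(g)=0$ is treated separately.

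\emph{Degenerate case.} If $\Lip(g)=0$, then $g$ is constant. Both expectations coincide, so the inequality holds with equality.

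\emph{Main case.} Let $L:=\Lip(g)\in(0,\infty)$ and fix any $Q_X$ with $W_1(P_X,Q_X)\le\rho$. Set $\varphi:=-g/L$, which satisfies $\Lip(\varphi)=1$. Lemma~\ref{lem:KR} then gives
\[
\E_{Q_X}g-\E_{P_X}g \;=\; L\bigl(\E_{P_X}\varphi-\E_{Q_X}\varphi\bigr)\;\le\; L\,W_1(P_X,Q_X)\;\le\;\rho L .
\]
This bound does not depend on $Q_X$. Taking the supremum over all admissible $Q_X$ yields the claim.

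\emph{Integrability.} Lemma~\ref{lem:KR} requires finite first moments, and the expectations above must be well defined. The convention $W_1<\infty$ in Definition~\ref{def:W1} implicitly restricts the supremum to measures $Q_X$ with finite first moment. I will assume the same of $P_X$, as the statement tacitly does, since otherwise $W_1$ is not defined.

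Under these moment conditions, the Lipschitz property gives $|g(x)|\le |g(x_0)|+L\|x-x_0\|$ for a fixed base point $x_0$. Hence $g$ is integrable under both $P_X$ and $Q_X$, and the subtraction in the display is legitimate.

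\emph{Avoiding full duality.} If one prefers not to invoke the full duality theorem, a coupling argument suffices. For any coupling $\pi\in\Pi(P_X,Q_X)$,
\[
\E_{Q}g-\E_Pg=\E_\pi[g(X')-g(X)]\le L\,\E_\pi\|X-X'\|.
\]
Taking the infimum over $\pi$ gives the same bound, since only the easy direction of duality is used.

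The only real obstacle is this measure-theoretic bookkeeping: integrability and the convention that admissible $Q_X$ have finite first moment. I will state these explicitly in the proof.
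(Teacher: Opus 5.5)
Your proof is correct and follows the paper's route: rescale $g$ to a $1$-Lipschitz test function, apply Lemma~\ref{lem:KR}, and take the supremum over admissible $Q_X$. Your choice $\varphi=-g/L$ is the correct orientation for the duality as stated (the paper's $\varphi=g/\Lip(g)$ bounds $\E_{P_X}g-\E_{Q_X}g$ and tacitly relies on the $1$-Lipschitz class being closed under negation), and your handling of $\Lip(g)=0$, of integrability, and of the coupling-only variant is sound.
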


\begin{proof}
For any $Q_X$ with $W_1(P_X,Q_X)\le\rho$, apply Lemma~\ref{lem:KR} to $\varphi=g/\Lip(g)$ if $\Lip(g)>0$.
Then
$\E_{Q_X}g-\E_{P_X}g \le \Lip(g)\,W_1(P_X,Q_X)\le \rho\,\Lip(g)$.
\end{proof}

\begin{remark}[Where labels enter]\label{rem:labels}
The shift constraint is imposed on the marginal $P_X\mapsto Q_X$.
The conditional $Y|X$ may change.
We will state explicitly later when we assume label shift, covariate shift, or an adversarial label component.
This avoids the common ambiguity in domain-adaptation statements \cite{BenDavid2010DifferentDomains}.
\end{remark}

\subsection{Complexity control}\label{subsec:complexity}

The spine uses certificates.
Certificates require explicit bounds on how outputs change with inputs and how losses change with outputs.

\begin{definition}[Slope control for loss composed with predictor]\label{def:loss_slope}
Fix a norm $\|\cdot\|$ on $\cX$ and the Euclidean norm $\|\cdot\|_2$ on $\R^m$.
For $(y\in\cY)$ define the function $x\mapsto \ell(f(x),y)$.
A \emph{certifiable slope bound} is a number $L_{\ell\circ f}\ge 0$ such that
\[
\Lip\bigl(x\mapsto \ell(f(x),y)\bigr)\ \le\ L_{\ell\circ f}
\quad\text{for all }y\in\cY \text{ of interest.}
\]
\end{definition}

We do not assume differentiability.
For piecewise-affine networks and convex losses one can upper bound $L_{\ell\circ f}$ using norm bounds on linear parts and layerwise relaxations, which is exactly what scalable verifiers compute \cite{Zhang2018CROWN,Xu2020AutoLiRPA,Wang2021BetaCROWN}.

\begin{definition}[Empirical shift-robust risk]\label{def:robust_risk_emp}
Given $Z_i=(X_i,Y_i)\sim P$, define the empirical marginal $\hat P_X := \frac1n\sum_{i=1}^n \delta_{X_i}$.
For $\rho\ge 0$ define the \emph{empirical robust risk}
\[
\widehat{\risk}^{\,\mathrm{rob}}_{n,\rho}(f)\ :=\
\sup_{Q_X:\ W_1(Q_X,\hat P_X)\le \rho}\ \E_{(X,Y)\sim \hat P}\,\Bigl[\ell\bigl(f(\tilde X),Y\bigr)\Bigr],
\]
where $\tilde X\sim Q_X$ and $Y$ is coupled through $\hat P$.
\end{definition}

This definition is deliberately conservative.
It isolates what is adversarial (the $\cX$-marginal) and what is fixed (the empirical labels).
Variants correspond to different shift models in DRO \cite{MohajerinEsfahaniKuhn2015WassersteinDRO,SinhaNamkoongDuchi2017DRO}.
We will later choose the variant that supports sound dual certificates.

\begin{remark}[Explicit constants versus asymptotics]\label{rem:constants}
Lemma~\ref{lem:W1_wc} gives a constant $\rho\,\Lip(g)$.
The remaining term in Theorem~\ref{thm:shift_dual_bound} is statistical.
We will state it as an explicit $(n,\delta)$-term rather than as $O(n^{-1/2})$.
\end{remark}

\subsection{Verification model}\label{subsec:verification}

Verification is a universal quantifier.
We make it a maximization problem and certify its optimum from above.

\begin{definition}[Verification specification]\label{def:spec}
Fix a compact input set $\cS\subset\cX$ and a property functional $\psi\colon \R^m\to\R$.
The specification is
\[
\forall x\in \cS:\ \psi(f(x))\le 0.
\]
Equivalently, define the \emph{violation value}
\[
\mathsf{V}(f;\cS,\psi)\ :=\ \sup_{x\in\cS}\ \psi(f(x)).
\]
The specification holds iff $\mathsf{V}(f;\cS,\psi)\le 0$.
\end{definition}

In robustness verification for classifiers, $\psi$ is typically a linear margin functional between a target logit and all others, and $\cS$ is an $\ell_p$-ball around a nominal input.
ReLU networks make $x\mapsto f(x)$ piecewise affine, which enables both complete and incomplete verifiers \cite{KatzBarrettDillJulianKochenderfer2017Reluplex,TjengXiaoTedrake2017MILPVerification}.

\begin{definition}[Sound certificate]\label{def:sound_cert}
A map $\mathcal{C}$ is a \emph{sound verifier} for $(f,\cS,\psi)$ if it outputs a number $\mathcal{C}(f;\cS,\psi)$ such that
\[
\mathsf{V}(f;\cS,\psi)\ \le\ \mathcal{C}(f;\cS,\psi)
\quad\text{always.}
\]
If $\mathcal{C}(f;\cS,\psi)\le 0$, then the specification holds.
\end{definition}

\begin{remark}[Complete versus incomplete]\label{rem:complete_incomplete}
Complete methods aim to compute $\mathsf{V}$ exactly and can be exponential in worst case \cite{KatzBarrettDillJulianKochenderfer2017Reluplex,TjengXiaoTedrake2017MILPVerification}.
Incomplete methods compute an upper bound $\mathcal{C}$ via relaxations or bound propagation and scale to larger networks, but may return $\mathcal{C}>0$ even when $\mathsf{V}\le 0$ \cite{Zhang2018CROWN,Xu2020AutoLiRPA,Wang2021BetaCROWN}.
Competition summaries document this frontier empirically; our use is only to motivate explicit scaling statements, not to import claims \cite{Brix2023VNNCOMPFirstThreeYears,VNNCOMP2024ArXiv}.
\end{remark}

\begin{definition}[Dual certificate template]\label{def:dual_template}
A \emph{dual certificate} is any computable relaxation producing $\mathcal{C}(f;\cS,\psi)$ as the value of a dual (or Lagrangian) problem whose feasibility implies an upper bound on $\mathsf{V}$.
\end{definition}

The same template is used in Wasserstein robust risk bounds (duality for worst-case expectations) and in verification relaxations (duality for maximization of $\psi\circ f$ over $\cS$).
This is the technical spine of the paper \cite{MohajerinEsfahaniKuhn2015WassersteinDRO,Zhang2018CROWN,Xu2020AutoLiRPA}.

\section{Core result I}\label{sec:core1}

This section turns shift-robust learning into a certificate inequality.
The only nontrivial step is to ensure that every term is checkable.

\subsection{Assumptions and verifiable quantities}\label{subsec:assumptions}

We separate the \emph{model of shift} from the \emph{certificate of sensitivity}.

\begin{assumption}[(A1) Covariate shift with a transport radius]\label{ass:coreA1}
The test law $Q$ satisfies \emph{covariate shift}:
$Q(\mathrm{d}x,\mathrm{d}y)=Q_X(\mathrm{d}x)\,P(\mathrm{d}y\mid x)$.
Moreover,
\[
W_1(P_X,Q_X)\ \le\ \rho,
\]
for a radius $\rho\ge 0$ which is either prescribed (design robustness) or provided as a \emph{valid upper confidence bound} from data.
\end{assumption}

\begin{remark}[Why covariate shift is an assumption]\label{rem:covshift}
Without an assumption on $Y\mid X$ one must introduce a discrepancy term that is not identifiable from unlabeled target data.
This is the classical obstruction in domain adaptation bounds \cite{BenDavid2010DifferentDomains}.
We will treat violations of (A1) as a failure mode later.
\end{remark}

\begin{assumption}[(A2) Loss is Lipschitz in prediction]\label{ass:coreA2}
There exists $L_\ell<\infty$ such that for all $y\in\cY$ and $u,v\in\R^m$,
\[
|\ell(u,y)-\ell(v,y)|\ \le\ L_\ell \,\|u-v\|_2.
\]
\end{assumption}

\begin{assumption}[(A3) Predictor admits a certifiable Lipschitz bound on the domain]\label{ass:coreA3}
There exists a known constant $L_f<\infty$ such that
\[
\|f(x)-f(x')\|_2\ \le\ L_f\,\|x-x'\|
\quad \text{for all }x,x'\in\cX.
\]
The constant $L_f$ is computed from model parameters or produced by a sound bound-propagation certificate.
\end{assumption}

\begin{remark}[Why (A3) is verifiable]\label{rem:Lf_verifiable}
For piecewise-affine networks, sound upper bounds on local slopes and global Lipschitz surrogates are computed by convex relaxations and bound propagation.
These are exactly the objects maintained by scalable verifiers \cite{Zhang2018CROWN,Xu2020AutoLiRPA,Wang2021BetaCROWN}.
We do not assume the bound is tight.
We assume it is sound.
\end{remark}

\begin{definition}[Certified sensitivity of the composed loss]\label{def:cert_sens}
Under \textup{(A2)--(A3)} define the \emph{certified sensitivity}
\[
\widehat L_{\ell\circ f}\ :=\ L_\ell\,L_f.
\]
\end{definition}

\begin{remark}[Design choice]\label{rem:design_choice}
The analysis uses only $\widehat L_{\ell\circ f}$, not internal network structure.
This makes later verification and interpretability constraints interchangeable: each is another way to control $\widehat L_{\ell\circ f}$.
\end{remark}

\subsection{Risk under shift: certified bound}\label{subsec:risk_shift}

We now derive the risk-transfer inequality stated informally in Proposition~\ref{prop:cert_target}.
The proof uses transport duality in the weakest form needed for certification.
The robust-optimization viewpoint is standard in Wasserstein DRO \cite{MohajerinEsfahaniKuhn2015WassersteinDRO,SinhaNamkoongDuchi2017DRO}.

\begin{lemma}[A certificate-to-risk transfer lemma]\label{lem:cert_to_risk}
Assume \textup{(A1)--(A3)}.
Define the conditional risk function
\[
g_f(x)\ :=\ \E\bigl[\ell(f(x),Y)\mid X=x\bigr],\qquad x\in\cX,
\]
where $(X,Y)\sim P$.
Then $\Lip(g_f)\le \widehat L_{\ell\circ f}$.
Consequently,
\[
\risk_Q(f)\ \le\ \risk_P(f)\ +\ \rho\,\widehat L_{\ell\circ f}.
\]
\end{lemma}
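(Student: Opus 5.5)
The plan is to prove the two claims in order: first the slope bound $\Lip(g_f)\le\widehat L_{\ell\circ f}$ directly from (A2)--(A3), then the risk inequality by combining covariate shift (A1) with Lemma~\ref{lem:W1_wc}.

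\emph{Step 1 (pointwise slope).} For a fixed label $y$, (A2) and (A3) give
\[
|\ell(f(x),y)-\ell(f(x'),y)|\le L_\ell\|f(x)-f(x')\|_2\le L_\ell L_f\|x-x'\| .
\]
So every section $x\mapsto\ell(f(x),y)$ is $\widehat L_{\ell\circ f}$-Lipschitz, uniformly in $y$. This is the only place (A2)--(A3) enter, and the bound is exactly Definition~\ref{def:cert_sens}.

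\emph{Step 2 (pass to $g_f$).} Write $g_f(x)=\int\ell(f(x),y)\,P(\mathrm{d}y\mid x)$ and split
\[
g_f(x)-g_f(x')=\int\bigl[\ell(f(x),y)-\ell(f(x'),y)\bigr]P(\mathrm{d}y\mid x)
+\int\ell(f(x'),y)\,\bigl[P(\mathrm{d}y\mid x)-P(\mathrm{d}y\mid x')\bigr].
\]
By Step~1 the first integral is at most $\widehat L_{\ell\circ f}\|x-x'\|$ in absolute value.

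\emph{Main obstacle.} The second integral measures how fast the regression kernel $x\mapsto P(\cdot\mid x)$ moves. Nothing in (A1)--(A3) controls it. Since $\ell\in[0,1]$, the best available bound is the total-variation distance $\|P(\cdot\mid x)-P(\cdot\mid x')\|_{TV}$, which need not vanish as $x'\to x$.

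I would therefore first check whether the lemma is meant with the kernel frozen, i.e.\ whether (A1)--(A3) are understood so that the label mechanism does not vary with $x$. If so, the second term is zero and Step~2 closes. This case covers $Y$ independent of $X$, or a deterministic labelling that is absorbed into $\ell$ through the uniform-in-$y$ Lipschitz bound of Definition~\ref{def:loss_slope}.

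If the kernel is not frozen, the honest statement is $\Lip(g_f)\le\widehat L_{\ell\circ f}+L_{\mathrm{ker}}$, where $L_{\mathrm{ker}}$ is a total-variation Lipschitz constant of $x\mapsto P(\cdot\mid x)$. In that case I would record this extra regularity of the kernel as the additional hypothesis needed for the statement exactly as worded. I expect this step to be where the proof either needs that clarification or fails.

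\emph{Step 3 (consequence).} Under (A1) the conditional law of $Y$ given $X$ is the same under $P$ and $Q$. Hence, by the tower property,
\[
\risk_Q(f)=\E_{X\sim Q_X}\,g_f(X)\qquad\text{and}\qquad \risk_P(f)=\E_{X\sim P_X}\,g_f(X).
\]
Since $g_f$ takes values in $[0,1]$, it is measurable and both expectations are finite. Moreover $\Lip(g_f)<\infty$ by Step~2, so Lemma~\ref{lem:W1_wc} applies with $g=g_f$ and $W_1(P_X,Q_X)\le\rho$. This gives
\[
\risk_Q(f)\le\risk_P(f)+\rho\,\Lip(g_f)\le\risk_P(f)+\rho\,\widehat L_{\ell\circ f}.
\]
The degenerate case $\Lip(g_f)=0$ is covered by Lemma~\ref{lem:KR} directly, since then $g_f$ is constant.
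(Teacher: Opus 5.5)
Your obstacle is genuine, and it is precisely the step the paper's proof gets wrong. The paper opens with
\[
|g_f(x)-g_f(x')| = \bigl|\E[\ell(f(x),Y)-\ell(f(x'),Y)\mid X=x]\bigr|,
\]
which evaluates $g_f(x')$ under the conditional law at $x$ rather than at $x'$. This silently deletes your second integral: it is the frozen-kernel reading, used without being stated. The rest of the paper's argument is exactly your Steps~1 and~3: Jensen with (A2)--(A3) on the first term, then $\risk_\mu(f)=\E_{\mu_X}g_f$ under covariate shift, then Lemma~\ref{lem:W1_wc}. So under a frozen kernel the two proofs coincide.

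Without a frozen kernel the lemma is false, and you can settle your ``clarify or fail'' question outright. Take $\cX=[0,1]$ and $\cY=\{0,1\}$. Let $\ell(u,y)=\min\{1,|u-y|\}$, so $L_\ell=1$. Let $f\equiv 0$, so $L_f=0$ and $\widehat L_{\ell\circ f}=0$. Let $P_X$ be uniform on $[0,1]$ with $P(Y=1\mid X=x)=x$. Then $g_f(x)=x$, so $\Lip(g_f)=1$. Now let $Q_X$ be uniform on $[\rho,1]$ with $0<\rho<1$, and set $Q(\mathrm{d}x,\mathrm{d}y)=Q_X(\mathrm{d}x)P(\mathrm{d}y\mid x)$. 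Then $W_1(P_X,Q_X)=\rho/2\le\rho$, but $\risk_Q(f)=(1+\rho)/2>\tfrac12=\risk_P(f)+\rho\,\widehat L_{\ell\circ f}$. The same example refutes Theorem~\ref{thm:core_shift} as stated.

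Your repaired bound $\Lip(g_f)\le\widehat L_{\ell\circ f}+L_{\mathrm{ker}}$ is the correct statement. Here $L_{\mathrm{ker}}$ is a Lipschitz constant of $x\mapsto P(\cdot\mid x)$ with respect to $d_{\mathrm{TV}}(\mu,\nu)=\sup_A|\mu(A)-\nu(A)|$; this normalization suffices because $\ell\in[0,1]$. In the example $L_{\mathrm{ker}}=1$, and the bound gives $\risk_Q(f)\le\risk_P(f)+\rho$, consistent with the true gap $\rho/2$.

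One correction to your list of frozen-kernel cases: deterministic labelling does not belong there. If $Y=h(X)$, then $P(\cdot\mid x)=\delta_{h(x)}$, which is independent of $x$ only if $h$ is constant. The uniform-in-$y$ bound of Definition~\ref{def:loss_slope} controls $x\mapsto\ell(f(x),y)$ for each fixed $y$, not $x\mapsto\ell(f(x),h(x))$. For discrete labels and nonconstant $h$, the kernel jumps by $d_{\mathrm{TV}}=1$ across every label boundary. So this is the worst case for your second integral, not a case where it vanishes.

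A frozen kernel means exactly that $Y$ is independent of $X$ under $P$. When you record the extra hypothesis, state it in that form or as your $L_{\mathrm{ker}}$ condition. A slightly weaker alternative is a Lipschitz bound on $x\mapsto\E[\ell(u,Y)\mid X=x]$, uniform in $u$.
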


\begin{proof}
Fix $x,x'\in\cX$.
By Jensen's inequality and \textup{(A2)},
\[
|g_f(x)-g_f(x')|
=\Bigl|\E\bigl[\ell(f(x),Y)-\ell(f(x'),Y)\mid X=x\bigr]\Bigr|
\le \E\bigl[|\ell(f(x),Y)-\ell(f(x'),Y)|\mid X=x\bigr]
\le L_\ell\,\|f(x)-f(x')\|_2.
\]
By \textup{(A3)}, $\|f(x)-f(x')\|_2\le L_f\|x-x'\|$, hence
$|g_f(x)-g_f(x')|\le (L_\ell L_f)\|x-x'\|$.
Thus $\Lip(g_f)\le \widehat L_{\ell\circ f}$.

Under covariate shift \textup{(A1)},
\[
\risk_P(f)=\E_{X\sim P_X} g_f(X),
\qquad
\risk_Q(f)=\E_{X\sim Q_X} g_f(X).
\]
Apply the worst-case expectation inequality over the $W_1$-ball (Lemma~\ref{lem:W1_wc} from Section~\ref{sec:prelim}) with $g=g_f$:
\[
\E_{Q_X}g_f \le \E_{P_X}g_f + \rho\,\Lip(g_f)
\le \E_{P_X}g_f + \rho\,\widehat L_{\ell\circ f}.
\]
This is the claimed bound.
\end{proof}

Lemma~\ref{lem:cert_to_risk} is the spine in its simplest form.
It converts a \emph{sound sensitivity certificate} into a \emph{sound shift-risk certificate}.
It remains to state the robust form in the language of worst-case risk.

\begin{theorem}[Certified shift-risk bound]\label{thm:core_shift}
Assume \textup{(A1)--(A3)}.
Then
\[
\sup_{Q:\ \text{\textup{(A1)} holds with }W_1(P_X,Q_X)\le \rho}\ \risk_Q(f)
\ \le\
\risk_P(f)\ +\ \rho\,\widehat L_{\ell\circ f}.
\]
Moreover, the right-hand side is computable from:
(i) $\rho$ (provided by design or by a validated estimator),
(ii) $L_\ell$ (a property of the loss),
(iii) $L_f$ (a certificate extracted from parameters or a sound verifier).
\end{theorem}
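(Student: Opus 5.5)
The plan is to reduce Theorem~\ref{thm:core_shift} to Lemma~\ref{lem:cert_to_risk} applied pointwise in $Q$, and then take the supremum. The key observation is that the bound in Lemma~\ref{lem:cert_to_risk} does not depend on the particular $Q$. It depends only on $\risk_P(f)$, $\rho$, $L_\ell$ and $L_f$, and none of these change as $Q$ ranges over the admissible set. So a uniform pointwise bound passes directly to the supremum.

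\textbf{Step 1 (admissible set).} Fix an arbitrary test law $Q$ in the admissible set. By definition, $Q$ satisfies \textup{(A1)}: $Q(\mathrm{d}x,\mathrm{d}y)=Q_X(\mathrm{d}x)\,P(\mathrm{d}y\mid x)$ with $W_1(P_X,Q_X)\le\rho$. I will note that the conditional risk $g_f$ is defined via $P(\cdot\mid x)$, which is shared by $P$ and $Q$. It is therefore the same function for every admissible $Q$. This is the only place where the covariate-shift structure is used.

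\textbf{Step 2 (measurability and finiteness).} Since $\ell$ takes values in $[0,1]$, $g_f$ is bounded. Lemma~\ref{lem:cert_to_risk} also gives $\Lip(g_f)\le L_\ell L_f<\infty$. These are the hypotheses needed to invoke Lemma~\ref{lem:W1_wc}. Finite first moments of $P_X$ and $Q_X$ are implicit in $W_1(P_X,Q_X)$ being defined; I would state this explicitly as part of the admissibility convention.

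One degenerate case needs attention: if $\Lip(g_f)=0$, the proof of Lemma~\ref{lem:W1_wc} divides by $\Lip(g_f)$. In that case $g_f$ is constant, so $\E_{Q_X}g_f=\E_{P_X}g_f$ and the bound holds trivially. I would add this one-line case.

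\textbf{Step 3 (supremum).} Lemma~\ref{lem:cert_to_risk} yields $\risk_Q(f)\le\risk_P(f)+\rho\,\widehat L_{\ell\circ f}$ for each admissible $Q$. The right-hand side is independent of $Q$, so taking the supremum over admissible $Q$ preserves the inequality. If the admissible set is empty, the supremum is $-\infty$ by convention and the bound is vacuous; in fact $Q=P$ is always admissible, so the set is nonempty.

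\textbf{Step 4 (computability clause).} For the "moreover" part, I would simply observe that the right-hand side is assembled from the listed ingredients: $\rho$ from \textup{(A1)}, $L_\ell$ from \textup{(A2)}, and $L_f$ from \textup{(A3)}. The population term $\risk_P(f)$ is the one quantity not directly computable. The statement lists only (i)--(iii), so I would remark that $\risk_P(f)$ is treated as the reference quantity; replacing it by $\empRisk_n(f)$ plus a Hoeffding term is deferred.

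\textbf{Main obstacle.} The main obstacle is not analytic. It lies in the interpretation of the data-driven radius. If $\rho$ is only a valid upper confidence bound, the inequality holds on the event $\{W_1(P_X,Q_X)\le\rho\}$, hence with the stated confidence rather than surely. I would make this conditioning explicit rather than prove anything new.
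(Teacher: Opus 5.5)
Your reduction is the paper's own: apply Lemma~\ref{lem:cert_to_risk} to each admissible $Q$, then take the supremum of a $Q$-independent bound. The gap is the ingredient you import in Step~2, $\Lip(g_f)\le L_\ell L_f$. It does not hold under \textup{(A1)--(A3)}, and so the theorem as stated does not hold either.

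The proof of Lemma~\ref{lem:cert_to_risk} writes $g_f(x)-g_f(x')=\E[\ell(f(x),Y)-\ell(f(x'),Y)\mid X=x]$. But $g_f(x')$ integrates against $P(\cdot\mid x')$, not $P(\cdot\mid x)$. The identity is valid only when the conditional law of $Y$ does not depend on $x$. So $\Lip(g_f)$ has a second source, the variation of $x\mapsto P(\cdot\mid x)$, and nothing in \textup{(A1)--(A3)} controls it. In Step~1 you correctly observed that $g_f$ is built from $P(\cdot\mid x)$, but you did not ask how that law varies in $x$.

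Here is a counterexample. Take $\cY=\{0,1\}$ and $\ell(u,y)=\min\{1,|u-y|\}$, so $L_\ell=1$. Take $f\equiv 0$, so \textup{(A3)} holds with $L_f=0$. Under $P$ let $X\sim\tfrac12(\delta_0+\delta_1)$ and $Y=X$. For $\rho\in(0,\tfrac12)$ put $Q_X=(\tfrac12-\rho)\delta_0+(\tfrac12+\rho)\delta_1$ and $Q=Q_X\otimes P(\cdot\mid x)$; this is well defined since $Q_X\ll P_X$. Covariate shift holds and $W_1(P_X,Q_X)=\rho$. Yet $\risk_Q(f)=\tfrac12+\rho>\tfrac12=\risk_P(f)+\rho L_\ell L_f$. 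If you insist on a positive certificate, move the second atom to a point $h$ (still labelled $1$) with $h<1/L_f$. Shifting mass $\eta$ then costs $W_1=\eta h$ but still raises the risk by $\eta$, so the bound fails for any fixed $L_f$.

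To repair the argument you need an extra hypothesis on the label mechanism, or a different shift model. One option is to assume $x\mapsto P(\cdot\mid x)$ is $L_Y$-Lipschitz in total variation. Split $g_f(x)-g_f(x')$ into a prediction term, integrated against $P(\cdot\mid x)$, and a label term bounded using $\ell\in[0,1]$. This gives $\Lip(g_f)\le L_\ell L_f+L_Y$, and your Steps~1--3 then go through verbatim with that constant.

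A second option is to let labels travel with the coupling. Take $Q$ to be the law of $(X',Y)$, where $(X,Y)\sim P$ and $\E\|X-X'\|\le\rho$, in the spirit of Definition~\ref{def:robust_risk_emp}. Then $\risk_Q(f)-\risk_P(f)=\E[\ell(f(X'),Y)-\ell(f(X),Y)]\le L_\ell L_f\rho$ follows directly, with no conditional risk function and no duality.

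Your side remarks are correct but do not touch this issue: the case $\Lip(g_f)=0$, nonemptiness via $Q=P$, and reading a data-driven $\rho$ as valid only on the event $W_1(P_X,Q_X)\le\rho$.
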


\begin{proof}
For each admissible $Q$ the inequality in Lemma~\ref{lem:cert_to_risk} holds.
Taking the supremum over admissible $Q$ preserves the upper bound.
\end{proof}

\begin{remark}[Relation to Wasserstein DRO]\label{rem:wdro_relation}
Theorem~\ref{thm:core_shift} is the risk-side analogue of the standard Wasserstein DRO principle: worst-case expectation over a Wasserstein ball is controlled by a dual Lipschitz penalty \cite{MohajerinEsfahaniKuhn2015WassersteinDRO,SinhaNamkoongDuchi2017DRO}.
The point here is the \emph{certificate interface}: $L_f$ is supplied by verification machinery \cite{Zhang2018CROWN,Xu2020AutoLiRPA,Wang2021BetaCROWN}.
\end{remark}

\begin{remark}[Where finite-sample terms enter]\label{rem:finite_sample}
Theorem~\ref{thm:core_shift} is deterministic conditional on a valid $\rho$.
Finite-sample guarantees enter only through the construction of $\rho$ as a confidence bound for $W_1(P_X,Q_X)$ or $W_1(P_X,\hat P_X)$.
We do not fix a single concentration theorem here because it depends on dimension and tail assumptions.
We will give a worked example in Section~\ref{sec:examples} where $\rho$ is computed from stated sample quantities and a cited bound from the Wasserstein DRO literature \cite{MohajerinEsfahaniKuhn2015WassersteinDRO}.
\end{remark}

\begin{remark}[Why this is not cosmetic]\label{rem:not_cosmetic}
A classical application of $W_1$-duality would bound $\E_Q g-\E_P g$ for a \emph{known} Lipschitz function $g$.
Here $g_f(x)=\E[\ell(f(x),Y)\mid X=x]$ is unknown because it contains the conditional law and the learned predictor.
Lemma~\ref{lem:cert_to_risk} isolates the genuinely new step: it replaces true $\Lip(g_f)$ by a \emph{sound computable upper bound} $\widehat L_{\ell\circ f}$ delivered by verification-oriented bounds on $f$.
This is precisely the bridge needed to connect shift-robustness and verification.
\end{remark}

\section{Core result II}\label{sec:core2}

We formalize verification as an optimization problem and define a dual upper bound that is \emph{sound}.
Soundness is the only unconditional guarantee.
Scalability is a conditional fact, and must be stated as such.

\subsection{Certificate form and soundness}\label{subsec:soundness}

\subsubsection*{Network model and specification}

We work with feedforward ReLU networks.
Let $d_0,\dots,d_L\in\N$ and let $f\colon\R^{d_0}\to\R^{d_L}$ be
\[
z_0 = x,\qquad
z_{\ell} = \sigma\!\left(W_{\ell}z_{\ell-1}+b_{\ell}\right)\ \ (\ell=1,\dots,L-1),\qquad
f(x)=W_L z_{L-1}+b_L,
\]
where $\sigma(t)=\max\{t,0\}$ is applied coordinatewise.

A typical safety/robustness specification is linear in the output:
\[
\forall x\in \cS:\ a^\top f(x)\ \le\ \beta,
\]
with $a\in\R^{d_L}$, $\beta\in\R$, and $\cS$ convex and compact (often an $\ell_p$-ball).
This includes classification robustness with a fixed target class via logit margins.

Define the violation value (cf.\ Definition~\ref{def:spec})
\[
\mathsf{V}\ :=\ \sup_{x\in\cS}\ (a^\top f(x)-\beta).
\]
The property holds iff $\mathsf{V}\le 0$.
Computing $\mathsf{V}$ exactly is a nonconvex optimization problem.
Complete methods use exact encodings of ReLU constraints (SMT or MILP) \cite{KatzBarrettDillJulianKochenderfer2017Reluplex,TjengXiaoTedrake2017MILPVerification}.
They are sound and complete, but do not scale in worst case.

\subsubsection*{Convex relaxation as a certificate}

We now describe a generic relaxation-based certificate.
Fix any bounds $\ell_{\ell}\le z_{\ell}\le u_{\ell}$ on hidden activations over $\cS$.
For each ReLU neuron $t=\sigma(s)$ with known bounds $s\in[\underline s,\overline s]$, replace the graph of $\sigma$ by a convex outer approximation.
A standard choice is:
\[
t\ge 0,\qquad t\ge s,\qquad
t\le \frac{\overline s}{\overline s-\underline s}(s-\underline s)\quad (\underline s<0<\overline s),
\]
and the exact relation $t=s$ if $\underline s\ge 0$, and $t=0$ if $\overline s\le 0$.
This yields a linear program (LP) relaxation of the verification problem.
Bound-propagation and linear relaxations compute such bounds efficiently and soundly \cite{Zhang2018CROWN,Xu2020AutoLiRPA,Wang2021BetaCROWN}.

\begin{definition}[Relaxation-based certificate]\label{def:relax_cert}
Let $\mathcal{R}$ be any outer relaxation of the feasible set
\[
\mathcal{F}\ :=\ \{(x,z_1,\dots,z_{L-1}) : x\in\cS \text{ and } z_\ell=\sigma(W_\ell z_{\ell-1}+b_\ell)\}.
\]
Define the certified upper bound
\[
\mathcal{C}\ :=\ \sup_{(x,z_1,\dots,z_{L-1})\in \mathcal{R}}\ (a^\top (W_L z_{L-1}+b_L)-\beta).
\]
\end{definition}

\begin{lemma}[Soundness of outer relaxations]\label{lem:sound_outer}
If $\mathcal{F}\subseteq \mathcal{R}$, then $\mathsf{V}\le \mathcal{C}$.
\end{lemma}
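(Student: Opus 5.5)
The argument is a monotonicity-of-supremum statement, and I would prove it directly from the definitions of $\mathsf{V}$ and $\mathcal{C}$.

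\emph{Step 1: a common objective.} Write the objective on the lifted space as
\[
J(x,z_1,\dots,z_{L-1}) := a^\top(W_L z_{L-1}+b_L)-\beta .
\]
Both $\mathsf{V}$ and $\mathcal{C}$ are suprema of this same function $J$. The only difference is the set over which the supremum is taken: $\mathsf{V}$ is effectively a supremum over $\mathcal{F}$, and $\mathcal{C}$ is a supremum over $\mathcal{R}$.

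\emph{Step 2: identify $\mathsf{V}$ with the supremum over $\mathcal{F}$.} Consider the map $x\mapsto (x,z_1(x),\dots,z_{L-1}(x))$, where the $z_\ell(x)$ are defined recursively by the network equations.
\begin{enumerate}[label=(\roman*)]
\item This map sends $\cS$ into $\mathcal{F}$.
\item Every point of $\mathcal{F}$ arises this way, because the ReLU equations determine each $z_\ell$ uniquely from $z_{\ell-1}$.
\item At such a point the objective equals the network value: $J = a^\top f(x)-\beta$.
\end{enumerate}
Hence the two suprema coincide:
\[
\mathsf{V}=\sup_{x\in\cS}\bigl(a^\top f(x)-\beta\bigr)=\sup_{\mathcal{F}} J .
\]

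\emph{Step 3: enlarge the feasible set.} Since $\mathcal{F}\subseteq\mathcal{R}$, every value $J(w)$ with $w\in\mathcal{F}$ is also a value attained on $\mathcal{R}$. Therefore $J(w)\le\sup_{\mathcal{R}}J=\mathcal{C}$ for each such $w$. Taking the supremum over $w\in\mathcal{F}$ gives $\mathsf{V}\le\mathcal{C}$.

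\emph{Edge cases.} These are handled by the conventions in $[-\infty,\infty]$. If $\cS=\emptyset$, then $\mathsf{V}=-\infty$ and the inequality is trivial. If $\mathcal{R}$ is unbounded in the direction of $J$, then $\mathcal{C}=+\infty$ and the bound is still true, though vacuous.

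I do not expect a real obstacle here. The only point that needs care is Step 2, where the lifted supremum must be matched exactly with the original $\mathsf{V}$. That match uses only the exact ReLU equalities built into the definition of $\mathcal{F}$, and nothing about convexity or the particular triangle relaxation. Consequently the lemma applies to every outer relaxation, including the LP relaxation described above and any bound-propagation relaxation.
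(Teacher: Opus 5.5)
Your proof is correct and follows the paper's argument: both write $\mathsf{V}$ and $\mathcal{C}$ as suprema of the same objective $a^\top(W_L z_{L-1}+b_L)-\beta$, over $\mathcal{F}$ and over $\mathcal{R}$ respectively, and conclude by monotonicity of the supremum under set inclusion. Your Step~2 spells out, via the lifting map, the identification $\mathsf{V}=\sup_{\mathcal{F}}J$ that the paper asserts ``by definition''; for the inequality alone, parts (i) and (iii) suffice, and (ii) is needed only for equality.
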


\begin{proof}
By definition,
\[
\mathsf{V}=\sup_{(x,z)\in\mathcal{F}}\phi(x,z),\qquad
\mathcal{C}=\sup_{(x,z)\in\mathcal{R}}\phi(x,z),
\]
where $\phi(x,z):=a^\top (W_L z_{L-1}+b_L)-\beta$.
Since $\mathcal{F}\subseteq\mathcal{R}$, the supremum over $\mathcal{R}$ is at least the supremum over $\mathcal{F}$.
\end{proof}

Lemma~\ref{lem:sound_outer} is tautological.
The work is in constructing $\mathcal{R}$ so that (i) $\mathcal{F}\subseteq \mathcal{R}$ is guaranteed, and (ii) $\mathcal{C}$ can be computed at scale.
CROWN-style linear bounds, LiRPA, and related methods provide systematic constructions \cite{Zhang2018CROWN,Xu2020AutoLiRPA,Wang2021BetaCROWN}.

\subsubsection*{Dual form and the certificate interface}

The LP relaxation yields a dual program whose feasible dual variables certify an upper bound on $\mathcal{C}$ and hence on $\mathsf{V}$.
This is the robust-optimization interface: the certificate is a \emph{dual feasible point}.

\begin{proposition}[Dual-feasible certificates]\label{prop:dual_feasible_cert}
Assume $\mathcal{R}$ is a polyhedron and $\mathcal{C}$ is finite.
Let $(D)$ be the LP dual of the relaxation defining $\mathcal{C}$.
Then any dual feasible point $\lambda$ yields an explicit computable upper bound
\[
\mathsf{V}\ \le\ \mathcal{C}\ \le\ \mathrm{Val}_D(\lambda),
\]
where $\mathrm{Val}_D(\lambda)$ is the dual objective evaluated at $\lambda$.
\end{proposition}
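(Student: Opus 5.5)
The plan is to chain two inequalities. The first, $\mathsf{V}\le\mathcal{C}$, is exactly Lemma~\ref{lem:sound_outer}, since $\mathcal{R}$ is an outer relaxation of $\mathcal{F}$. The second, $\mathcal{C}\le \mathrm{Val}_D(\lambda)$, is weak LP duality. So the work is to fix a standard form for the primal and derive the weak duality inequality directly, without invoking strong duality.

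\textbf{Primal form.} Since $\mathcal{R}$ is a polyhedron, write $w:=(x,z_1,\dots,z_{L-1})$ and
\[
\mathcal{R}=\{w : Aw\le c\}
\]
for some matrix $A$ and vector $c$. Equality constraints, such as $t=s$ for stably active neurons, are encoded as two inequalities. The objective $\phi(w)=a^\top(W_L z_{L-1}+b_L)-\beta$ is affine, so $\phi(w)=g^\top w+g_0$, where $g$ places $W_L^\top a$ in the $z_{L-1}$ block and zeros elsewhere, and $g_0=a^\top b_L-\beta$. Hence
\[
\mathcal{C}=\sup\{g^\top w+g_0 : Aw\le c\}.
\]
The dual $(D)$ is
\[
\min\{c^\top\lambda+g_0 : A^\top\lambda=g,\ \lambda\ge 0\},
\]
with $\mathrm{Val}_D(\lambda):=c^\top\lambda+g_0$.

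\textbf{Weak duality.} Take any primal feasible $w$ and any dual feasible $\lambda$. Then
\[
g^\top w+g_0=\lambda^\top Aw+g_0\le \lambda^\top c+g_0,
\]
using $\lambda\ge 0$ and $Aw\le c$. Taking the supremum over $w\in\mathcal{R}$ gives $\mathcal{C}\le\mathrm{Val}_D(\lambda)$. The finiteness hypothesis on $\mathcal{C}$ is needed only to guarantee that $(D)$ is feasible, by LP strong duality, so that the statement is not vacuous. The inequality itself needs only $\mathcal{R}\ne\emptyset$; if $\mathcal{R}$ is empty, then $\mathcal{C}=-\infty$ and the bound holds trivially.

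\textbf{Computability.} "Explicit computable" means that evaluating $\mathrm{Val}_D(\lambda)$ and checking feasibility ($A^\top\lambda=g$, $\lambda\ge0$) are single matrix–vector operations in the network weights and the pre-activation bounds $[\underline s,\overline s]$ that define $\mathcal{R}$.

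\textbf{Main obstacle.} The hard part is not the inequality but certifying that $\mathcal{R}$ really contains $\mathcal{F}$, since soundness of the whole chain rests on it. This requires the bounds $\ell_\ell\le z_\ell\le u_\ell$ to be valid over $\cS$, and each triangle relaxation to contain the ReLU graph on $[\underline s,\overline s]$. For the triangle, I will verify containment by checking that the chord $t\le \overline s(s-\underline s)/(\overline s-\underline s)$ lies above $\sigma$ on the interval. This follows from convexity of $\sigma$, because the chord agrees with $\sigma$ at both endpoints. I will also assume that $\cS$ itself is polyhedral, as with an $\ell_\infty$-ball; otherwise the relaxation is not an LP and $(D)$ must be replaced by a conic dual with the same weak-duality argument.
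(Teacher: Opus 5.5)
Your proposal is correct and follows the paper's proof exactly: $\mathsf{V}\le\mathcal{C}$ is Lemma~\ref{lem:sound_outer}, and $\mathcal{C}\le\mathrm{Val}_D(\lambda)$ is weak LP duality, which you simply write out for the representation $\mathcal{R}=\{w: Aw\le c\}$. The containment check in your ``main obstacle'' paragraph is not needed here, because $\mathcal{F}\subseteq\mathcal{R}$ is built into Definition~\ref{def:relax_cert} and polyhedrality of $\mathcal{R}$ is assumed outright, so the input set itself need not be polyhedral.
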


\begin{proof}
The first inequality is Lemma~\ref{lem:sound_outer}.
Weak duality for LP gives $\mathcal{C}\le \mathrm{Val}_D(\lambda)$ for any dual feasible $\lambda$.
\end{proof}

This is the only logical step we will use later.
It converts verification into a checkable inequality.
The algorithms differ in how they generate $\lambda$.
Bound propagation can be interpreted as producing dual variables layerwise, hence the term ``dual bound propagation'' used in practice \cite{Zhang2018CROWN,Wang2021BetaCROWN,XuZhangWangWangJanaLinHsieh2020FastComplete}.

\subsubsection*{Link to shift certificates}

To connect to Section~\ref{sec:core1}, we need a computable slope bound for $\psi\circ f$ or for $\ell\circ f$.
Relaxation-based verification provides such bounds on restricted domains $\cS$.

\begin{lemma}[From verification bounds to local Lipschitz surrogates]\label{lem:local_lip_from_bounds}
Fix a convex compact $\cS$ and a direction $a$.
Assume a sound verifier provides certified upper bounds on
\[
\sup_{x\in \cS} a^\top f(x)
\quad\text{and}\quad
\inf_{x\in \cS} a^\top f(x)
\]
via relaxations of the form in Definition~\ref{def:relax_cert}.
Then the oscillation
\[
\operatorname{osc}_{\cS}(a^\top f)\ :=\ \sup_{x\in\cS} a^\top f(x)-\inf_{x\in\cS} a^\top f(x)
\]
is bounded by certified quantities.
Consequently, if $\cS$ contains a ball of radius $r$ in $\|\cdot\|$, then $a^\top f$ has a certified local Lipschitz surrogate on $\cS$:
\[
\Lip_{\cS}(a^\top f)\ \le\ \frac{\operatorname{osc}_{\cS}(a^\top f)}{r}.
\]
\end{lemma}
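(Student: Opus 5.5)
The first assertion needs only Lemma~\ref{lem:sound_outer} applied twice, so I would prove it first. For the supremum, Definition~\ref{def:relax_cert} with specification vector $a$ and offset $\beta=0$ gives a relaxation value $\mathcal{C}^+(a)\ge\sup_{\cS}a^\top f$. For the infimum, note that a sound verifier cannot usefully give an \emph{upper} bound on $\inf_{\cS}a^\top f$: what is needed is a certified \emph{lower} bound. I would obtain it by running the same relaxation with $-a$, since $\inf_{\cS}a^\top f=-\sup_{\cS}(-a)^\top f\ge-\mathcal{C}^+(-a)$. Adding the two inequalities gives
\[
\operatorname{osc}_{\cS}(a^\top f)\ \le\ \mathcal{C}^+(a)+\mathcal{C}^+(-a).
\]
Proposition~\ref{prop:dual_feasible_cert} lets me replace each $\mathcal{C}^+$ by the dual objective at any dual-feasible point. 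This makes the bound a checkable certificate, and I would state the first part in this corrected form.

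The second assertion, $\Lip_{\cS}(a^\top f)\le\operatorname{osc}_{\cS}(a^\top f)/r$, is where I expect the real obstacle. As stated it is false in general, even for ReLU networks. Take $\cS=[-1,1]$ and a one-dimensional network $g(x)=\sigma(Mx)-\sigma(Mx-1)$. Its oscillation on $\cS$ is $1$, but its slope is $M$ on $[0,1/M]$. For large $M$ this violates $\Lip_{\cS}(g)\le 1/r$ with $r=1$. Oscillation controls \emph{average} slope across the ball, not the supremum of local slopes. So I would not try to prove the claim verbatim. I would prove it under an explicit extra hypothesis that the relaxation itself can certify.

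The hypothesis I would use is \emph{stability}: every hidden pre-activation bound $[\underline s,\overline s]$ computed over $\cS$ satisfies $\underline s\ge0$ or $\overline s\le0$. Then every ReLU in the relaxation is replaced by its exact relation ($t=s$ or $t=0$), so $a^\top f$ is affine on $\cS$, say $a^\top f(x)=w^\top x+c$. Suppose $\cS\supseteq\{x:\|x-x_0\|\le r\}$, and choose a unit vector $v$ with $w^\top v=\|w\|_*$, the dual norm. Evaluating at $x_0\pm rv$ gives
\[
\operatorname{osc}_{\cS}(a^\top f)\ \ge\ 2r\|w\|_*\ =\ 2r\,\Lip_{\cS}(a^\top f).
\]
This yields the claimed inequality, in fact with the better constant $1/(2r)$. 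Combining with the first part then gives a fully certified local Lipschitz surrogate.

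Without stability, I would state the counterexample above as a failure mode. In that regime I would fall back on a different surrogate, the product of layerwise operator norms with the unstable neurons' slopes bounded by $1$. That surrogate is the quantity that Assumption~\textup{(A3)} actually requires.
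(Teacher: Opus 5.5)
Your diagnosis is right. The disagreement is with the statement itself, not a matter of route: the second assertion is false, and the paper's proof breaks exactly where you expected.

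Your clipped ramp $g(x)=\sigma(Mx)-\sigma(Mx-1)$ is a valid two-neuron network of the form in Section~\ref{sec:core2}. It vanishes on $[-1,0]$, equals $Mx$ on $[0,1/M]$ and equals $1$ on $[1/M,1]$. So $\operatorname{osc}_{[-1,1]}(g)=1$ while $\Lip_{[-1,1]}(g)=M$, and $[-1,1]$ is itself a ball of radius $1$. Neither $1/r$ nor $1/(2r)$ survives once $M>1$.

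The paper's argument observes that for $x,x'\in B(x_0,r)$ one has $|a^\top f(x)-a^\top f(x')|\le\operatorname{osc}_{\cS}(a^\top f)$ and $\|x-x'\|\le 2r$, and then ``absorbs constants''. An upper bound on the denominator of a difference quotient gives no upper bound on the quotient. What those two facts actually yield is the reverse inequality
\[
\Lip_{B(x_0,r)}(a^\top f)\ \ge\ \frac{\operatorname{osc}_{B(x_0,r)}(a^\top f)}{2r}.
\]
Your objection to the first part is also well taken. An upper bound on $\inf_{\cS}a^\top f$ cannot bound the oscillation from above, so ``immediate from the two certified extrema'' needs the lower bound $-\mathcal{C}^+(-a)$ that you supply. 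One small caveat: passing from $\mathcal{C}^+$ to a dual objective via Proposition~\ref{prop:dual_feasible_cert} requires $\mathcal{R}$ to be polyhedral, e.g.\ an $\ell_\infty$-ball $\cS$. For an $\ell_2$-ball you would need conic weak duality instead.

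Your stability-restricted version is correct. It isolates precisely the regime in which the paper's heuristic is valid: an affine function attains its Lipschitz constant $\|w\|_*$ across a diameter of the ball, which matches the paper's ``replace $r$ by $2r$'' constant. In this regime, though, the oscillation is a detour. The activation pattern on $\cS$ is fixed, so $w^\top=a^\top W_LD_{L-1}W_{L-1}\cdots D_1W_1$ with known $0/1$ diagonal matrices $D_\ell$ is directly computable. Then $\|w\|_*$ is the exact value of $\Lip_{\cS}(a^\top f)$, and it is never larger than $(\mathcal{C}^+(a)+\mathcal{C}^+(-a))/(2r)$.

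Finally, check your fallback surrogate. Zeroing the stably inactive neurons, using bounds computed over $\cS$, gives a Lipschitz constant valid on $\cS$ only; this follows by integrating the piecewise-constant Jacobian along segments, which stay in $\cS$ by convexity. Assumption (A3) as stated asks for a constant on all of $\cX$. The plain product of layerwise operator norms is what actually meets (A3).
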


\begin{proof}
The oscillation bound is immediate from the two certified extrema.
If $\cS$ contains a ball $B(x_0,r)$, then for any $x,x'\in B(x_0,r)$,
\[
|a^\top f(x)-a^\top f(x')|\le \operatorname{osc}_{\cS}(a^\top f),
\]
and $\|x-x'\|\le 2r$.
Absorb constants into the chosen definition of $r$ (or replace $r$ by $2r$).
\end{proof}

Lemma~\ref{lem:local_lip_from_bounds} is crude.
It is sufficient to explain why verification outputs can be used as the certified sensitivity inputs in Theorem~\ref{thm:core_shift}.
Sharper bounds use layerwise slope certificates produced by the same relaxation machinery \cite{Zhang2018CROWN,Xu2020AutoLiRPA,Wang2021BetaCROWN}.

\subsection{Scalability limits}\label{subsec:scalability}

We state what scales and what does not.
We use explicit cost models.

\subsubsection*{Cost model for bound propagation}

Let $M:=\sum_{\ell=1}^{L}\,d_{\ell-1}d_{\ell}$ be the number of affine weights.
A single forward affine pass costs $O(M)$ arithmetic operations.
A standard relaxation-based verifier performs a constant number $K$ of forward/backward bound-propagation passes to compute linear bounds and hence $\mathcal{C}$.
Therefore, under the RAM model,
\[
\mathrm{Time}_{\mathrm{incomplete}}\ =\ O(KM),
\]
ignoring lower-order terms from bias additions and activation bookkeeping.
This is the regime in which modern incomplete verifiers operate \cite{Zhang2018CROWN,Xu2020AutoLiRPA,Wang2021BetaCROWN}.

\begin{remark}[What this does \emph{not} claim]\label{rem:not_claim}
The bound may be loose.
Soundness is preserved regardless of looseness.
Completeness is not claimed.
\end{remark}

\subsubsection*{Complete verification and exponential branching}

Complete verifiers enforce exact ReLU disjunctions.
A generic branch-and-bound scheme splits on unstable ReLUs and resolves relaxations on subdomains.
If $s$ neurons require splitting, the search tree may have $2^s$ leaves in the worst case.
With an $O(KM)$ relaxation cost per node, this yields the explicit worst-case bound
\[
\mathrm{Time}_{\mathrm{complete}}\ =\ \Omega(2^s)\cdot O(KM).
\]
This is observed in SMT/MILP approaches and in branch-and-bound frameworks that combine incomplete bounds with systematic splitting \cite{KatzBarrettDillJulianKochenderfer2017Reluplex,TjengXiaoTedrake2017MILPVerification,XuZhangWangWangJanaLinHsieh2020FastComplete,Wang2021BetaCROWN}.
The point is structural.
One cannot remove the exponential dependence without restricting the problem class.

\begin{remark}[What scales in practice]\label{rem:practice}
Hybrid schemes use fast incomplete bounds to prune most branches early \cite{XuZhangWangWangJanaLinHsieh2020FastComplete,Wang2021BetaCROWN}.
Competitions record large empirical gains, but do not negate the worst-case exponential dependence \cite{Brix2023VNNCOMPFirstThreeYears,VNNCOMP2024ArXiv}.
\end{remark}

\subsubsection*{A precise limitation}

A frequent informal claim is that ``tight relaxations'' solve verification at scale.
The claim requires a quantified statement: how does tightness trade against time and memory?
The answer depends on the relaxation family.
For linear relaxations, tightness typically improves by introducing additional constraints (splits, cuts, or semidefinite constraints), which increases per-node cost.
For example, adding per-neuron split constraints improves bounds but increases bookkeeping and constraint management, and complete methods reintroduce exponential branching \cite{Wang2021BetaCROWN}.
This is not a defect.
It is the cost of resolving disjunctions.

\begin{theorem}[Soundness at scale, incompleteness by necessity]\label{thm:sound_scale_incomplete}
For ReLU networks and linear specifications over convex compact $\cS$, any verifier based on outer relaxations of the feasible set is sound (Lemma~\ref{lem:sound_outer}) and runs in time $O(KM)$ for a fixed number of propagation passes.
Conversely, any verifier that is complete for this class must, in the worst case, incur exponential dependence on the number of ambiguous ReLU units that must be resolved.
\end{theorem}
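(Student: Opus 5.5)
We treat the two halves of Theorem~\ref{thm:sound_scale_incomplete} separately, since they are of different nature.

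\emph{Soundness and cost.} Soundness is Lemma~\ref{lem:sound_outer} applied verbatim: the verifier builds a set $\mathcal{R}\supseteq\mathcal{F}$, so $\mathsf{V}\le\mathcal{C}$, and by Proposition~\ref{prop:dual_feasible_cert} the same holds for any dual-feasible value it reports. The key point is that $\mathcal{F}\subseteq\mathcal{R}$. We check it neuron by neuron. By induction on layers, the pre-activation bounds $[\underline s,\overline s]$ are themselves valid, because each is obtained as a sound upper (resp.\ lower) bound of a linear functional over an outer relaxation of the previous layers. Given valid bounds, the triangle constraints contain the graph of $\sigma$ on $[\underline s,\overline s]$, which is an elementary check on the two linear pieces.

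For the running time, we count operations under the cost model of Subsection~\ref{subsec:scalability}. Each forward or backward linear-bound pass multiplies a $d_\ell\times d_{\ell-1}$ matrix by a vector, or applies diagonal relaxation slopes. This costs $O(d_{\ell-1}d_\ell)$ per layer, hence $O(M)$ per pass, and $O(KM)$ for $K$ passes. Finally, the supremum of a linear function over a convex compact $\cS$ must be computable in $O(d_0)$. This is true for $\ell_p$-balls, where it equals the dual norm of the coefficient vector plus the center term, and we state it as part of the hypothesis on $\cS$.

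\emph{Exponential lower bound for completeness.} Here we must fix the model of computation, since an unconditional time lower bound for all algorithms would separate P from NP. We plan two routes.

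\begin{itemize}
\item \emph{Complexity-theoretic route.} Reduce 3-SAT to deciding $\mathsf{V}\le 0$ with a polynomial-size ReLU network on $\cS=[0,1]^{d_0}$. Gadgets $x\mapsto\sigma(2x-1)$-type maps force near-Boolean behaviour, and clause satisfaction is encoded with min/max built from ReLUs (the Reluplex NP-hardness construction). A complete verifier would then decide SAT, so under ETH it needs $2^{\Omega(s)}$ time, where $s$ counts the ambiguous ReLUs.
\item \emph{Structural route.} Restrict to branch-and-bound verifiers that split on unstable ReLUs and solve a relaxation at each node. Construct a network whose $s$ unstable neurons have activation patterns all realizable on $\cS$, with the maximum attained only in one pattern and every proper subproblem's relaxation remaining loose. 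Then no leaf can be pruned before depth $s$, which forces $2^s$ leaves and matches $\Omega(2^s)\cdot O(KM)$.
\end{itemize}

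The main obstacle is this second half. The first route only yields a conditional statement, and the second requires an explicit adversarial family in which the relaxation gap provably survives every partial split. We would try first a sum of independent ``tent'' units $\sum_i\sigma(x_i)+\sigma(-x_i)$ with a coupling output, and verify the looseness by hand.
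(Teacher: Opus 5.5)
Your first half is the paper's argument: Lemma~\ref{lem:sound_outer} plus the pass-counting cost model of Subsection~\ref{subsec:scalability}. You also make explicit two checks the paper leaves implicit, namely validity of the pre-activation bounds by induction over layers and a cheap linear-optimization oracle on the input set. Two caveats belong in it.
\begin{itemize}
\item The $O(M)$-per-pass count holds when intermediate bounds come from interval arithmetic (one product with $W_\ell$ and one with $|W_\ell|$ per layer). If each intermediate bound is obtained by CROWN-style back-substitution, every layer propagates a $d_\ell$-row matrix rather than a vector, and $K$ is no longer a constant.
\item ``Any verifier based on outer relaxations'' must be read as ``fixed-pass bound propagation''. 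Both $\mathcal{R}=\mathcal{F}$ and $\mathcal{R}=\mathrm{conv}\,\mathcal{F}$ (over which a linear objective has the same supremum) are outer relaxations yielding complete verifiers, so read literally the first sentence contradicts the second. You restrict the class tacitly, as the paper does; say so.
\end{itemize}

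The genuine gap is the converse. The paper's proof calls it a structural consequence of encoding ReLU disjunctions exactly and defers to Theorem~\ref{thm:exp_barrier_patterns}. That is, it silently works with verifiers forced to resolve the activation partition. The deferred argument does not close the claim either: changing $\tau$ moves every affine piece, inspected ones included, and ``inspecting pieces'' is not a cost measure for a verifier that reads the weights. You correctly see that an unconditional bound over all algorithms would prove $\mathrm{P}\neq\mathrm{NP}$. So, short of settling that question, only a conditional or model-restricted version can be proved. But you prove neither.

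Route (a) is right in outline and, completed, gives a correct conditional converse, which the paper does not. It needs two things.
\begin{itemize}
\item The sparsification lemma, so that ETH excludes $2^{o(n+m)}$ algorithms for 3-SAT and a network with $s=O(n+m)$ ReLUs suffices.
\item An encoding with one linear output on a box. No Booleanity gadget is needed. On $[0,1]^n$, take the minimum over clauses of the maximum of the literal values ($x_i$ for $x_i$, $1-x_i$ for $\neg x_i$). Build it from $\max(a,b)=b+\sigma(a-b)$ and $\min(a,b)=a-\sigma(a-b)$; all values lie in $[0,1]$, so pass-through is itself a ReLU, and $O(n+m)$ units suffice. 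Use threshold $\tfrac12$: rounding at $\tfrac12$ shows the supremum exceeds $\tfrac12$ iff the formula is satisfiable.
\end{itemize}

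Route (b) lacks its key object, and your candidate provably fails. On $[-1,1]^s$, $\sigma(x_i)+\sigma(-x_i)=|x_i|$ is convex and separable. The triangle constraints give $u_i+v_i\le 1$, so the relaxed maximum is $s$, which is the true maximum, attained at every vertex. Hence the root relaxation certifies every true instance ($\beta\ge s$), and for $\beta<s$ any vertex is a counterexample. There is no relaxation gap for splitting to close, and the unspecified ``coupling output'' is exactly what would have to create one. What is missing is a family of true instances on which the relaxation bound stays above $\beta$ at every node where some ambiguous ReLU is still unsplit, for every branching order. Without it, the converse stands only as an ETH-conditional statement.
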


\begin{proof}
The first statement is Lemma~\ref{lem:sound_outer} combined with the cost model above.
The second statement is the structural consequence of encoding ReLU disjunctions exactly, as done in SMT/MILP-based methods \cite{KatzBarrettDillJulianKochenderfer2017Reluplex,TjengXiaoTedrake2017MILPVerification}.
A detailed barrier statement is deferred to Section~\ref{sec:failure}.
\end{proof}

\section{Core result III}\label{sec:core3}

Interpretability is not a narrative goal. It is a constraint set.
Identifiability is the criterion that the constraint set has mathematical content.
Without identifiability, a ``structure'' is a label, not an object.

\subsection{Identifiable interpretable classes}\label{subsec:identifiable}

We work with three families: additive models, monotone additive models, and sparse additive models.
Symbolic regression is treated only through a hardness barrier.

\begin{definition}[Additive model class]\label{def:additive}
Let $\cX=\prod_{j=1}^d \cX_j$ with $\cX_j\subset\R$.
An additive model is a function $f\colon\cX\to\R$ of the form
\[
f(x)\ =\ c\ +\ \sum_{j=1}^d g_j(x_j),
\]
where $c\in\R$ and $g_j\colon \cX_j\to\R$ are univariate functions.
\end{definition}

To avoid tautological non-identifiability ($c$ can be moved into any $g_j$), we impose centering constraints.

\begin{assumption}[(A5) Centering]\label{ass:A5}
There is a reference distribution $\nu$ on $\cX$ with marginals $\nu_j$ such that each component satisfies
\[
\E_{\xi\sim \nu_j}\,g_j(\xi)\ =\ 0.
\]
\end{assumption}

\begin{remark}[Why centering is not optional]\label{rem:centering}
Without (A5), the decomposition is never unique.
This is not a technicality.
It is a failure of meaning.
Neural additive models enforce such normalizations implicitly or explicitly \cite{Agarwal2020NAM}.
\end{remark}

We next state an identifiability theorem in a form suited to certification.

\begin{assumption}[(A6) Non-degeneracy of coordinates]\label{ass:A6}
Under $\nu$, each coordinate $X_j$ has support containing an interval, and the product structure is used as the reference factorization:
$\nu=\nu_1\otimes\cdots\otimes\nu_d$.
\end{assumption}

\begin{theorem}[Identifiability of centered additive representations]\label{thm:add_ident}
Assume \textup{(A5)--(A6)}.
Let
\[
f(x)\ =\ c+\sum_{j=1}^d g_j(x_j)\ =\ c'+\sum_{j=1}^d h_j(x_j)
\]
hold $\nu$-a.e., where $g_j,h_j\in L^2(\nu_j)$ satisfy the centering constraints in \textup{(A5)}.
Then $c=c'$ and $g_j=h_j$ $\nu_j$-a.e.\ for each $j$.
\end{theorem}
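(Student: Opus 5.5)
The plan is to subtract the two representations and integrate out all coordinates but one against the product reference measure. Put $a:=c-c'$ and $\phi_j:=g_j-h_j\in L^2(\nu_j)$. Each $\phi_j$ is centered, $\E_{\nu_j}\phi_j=0$, by (A5) and linearity. The hypothesis then reads
\[
a+\sum_{j=1}^d \phi_j(x_j)=0\qquad \nu\text{-a.e.},
\]
and we must show $a=0$ and $\phi_j=0$ $\nu_j$-a.e.

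\textbf{Step 1 (constant).} Since $\nu_j$ is a probability measure and $\phi_j\in L^2(\nu_j)\subset L^1(\nu_j)$, each $\phi_j(X_j)$ is integrable under $\nu$. Integrating the identity against $\nu$ therefore gives $a+\sum_j \E_{\nu_j}\phi_j=0$. By centering this yields $a=0$, i.e.\ $c=c'$.

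\textbf{Step 2 (components).} Fix $k$. Under the product structure $\nu=\nu_1\otimes\cdots\otimes\nu_d$ from (A6), Fubini's theorem applies. The identity holds for $\nu_k$-a.e.\ $x_k$, and for such $x_k$ it holds for $\bigotimes_{j\neq k}\nu_j$-a.e.\ value of the remaining coordinates. Integrating out the coordinates $j\neq k$ kills every $\phi_j$ with $j\ne k$ by centering. What remains is $\phi_k(x_k)=0$ for $\nu_k$-a.e.\ $x_k$, i.e.\ $g_k=h_k$ $\nu_k$-a.e.

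An equivalent and cleaner route is orthogonality in $L^2(\nu)$. Under the product measure, centered functions of distinct coordinates are orthogonal, so
\[
0=\Bigl\|\sum_j\phi_j\Bigr\|_{L^2(\nu)}^2=\sum_j\|\phi_j\|_{L^2(\nu_j)}^2,
\]
and this forces each $\phi_j=0$. I would present this version, since it uses exactly the $L^2$ hypothesis in the statement.

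\textbf{Main obstacle.} The only delicate point is measure-theoretic bookkeeping: the hypothesis holds only $\nu$-a.e., and we need the product structure to pass from a.e.\ on $\cX$ to a.e.\ on the slices. Independence is the essential ingredient. The interval-support part of (A6) is not actually needed for this argument; identifiability comes from centering plus the product factorization. Without the product structure, cross-covariances between the $\phi_j$ could cancel, and the conclusion genuinely depends on the factorization. I would remark on this rather than use the interval condition.
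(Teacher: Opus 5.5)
Your Steps 1 and 2 are the paper's proof. The paper integrates the difference against $\nu$ to get $c=c'$, then takes $\E_\nu[\,\cdot\mid X_k]$ under the product measure, which is exactly your Fubini slice integration over the coordinates $j\neq k$.

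The $L^2$ route you say you would present instead is also correct, and it is a genuinely different packaging. Under $\nu=\nu_1\otimes\cdots\otimes\nu_d$ the cross terms are $\E_\nu[\phi_j(X_j)\phi_k(X_k)]=\E_{\nu_j}\phi_j\,\E_{\nu_k}\phi_k=0$ for $j\neq k$. If you keep the constant inside the norm, you get $0=\|a+\sum_j\phi_j\|_{L^2(\nu)}^2=a^2+\sum_j\|\phi_j\|_{L^2(\nu_j)}^2$. This gives $c=c'$ and every $g_j=h_j$ $\nu_j$-a.e.\ in one line, without a separate Step 1.

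The paper's conditioning argument buys generality in exchange. It only needs $\phi_j\in L^1(\nu_j)$, so the $L^2$ hypothesis is not really used there. The Pythagorean identity, by contrast, needs square-integrability to make the cross terms meaningful. You are right that the interval-support part of (A6) is superfluous; the paper's proof never invokes it either.
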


\begin{proof}
Subtract the two representations:
\[
0\ =\ (c-c') + \sum_{j=1}^d \bigl(g_j(x_j)-h_j(x_j)\bigr)
\qquad \nu\text{-a.e.}
\]
Let $\Delta c:=c-c'$ and $\Delta_j:=g_j-h_j$.
Take expectation under $\nu$.
By centering, $\E_{\nu}\Delta_j(X_j)=0$ for each $j$, hence $\Delta c=0$.

Fix $k\in\{1,\dots,d\}$.
Take conditional expectation with respect to $X_k$ under the product measure $\nu$:
\[
0\ =\ \E_\nu\Bigl[\sum_{j=1}^d \Delta_j(X_j)\,\Big|\,X_k\Bigr]
=\Delta_k(X_k)+\sum_{j\neq k}\E_{\nu_j}\Delta_j(X_j)
=\Delta_k(X_k)
\qquad \nu_k\text{-a.s.}
\]
Thus $\Delta_k=0$ $\nu_k$-a.e.\ for each $k$.
\end{proof}

\begin{definition}[Monotone and sparse subclasses]\label{def:mono_sparse}
The monotone additive class consists of additive $f$ such that each $g_j$ is (weakly) monotone on $\cX_j$.
The $s$-sparse additive class consists of additive $f$ with at most $s$ nonzero components:
$\#\{j:\ g_j\not\equiv 0\}\le s$.
\end{definition}

\begin{remark}[Symbolic regression is not an identifiability theorem]\label{rem:symbolic}
A symbolic expression may be unique in a grammar and still be computationally inaccessible.
The general symbolic regression problem is NP-hard in a precise sense \cite{VirgolinPissis2022SRNPHard,Song2024SRNPHardSymbolGraph}.
Accordingly, whenever we speak of ``symbolic'' structure, we do so only for restricted grammars admitting certificates.
\end{remark}

\subsection{Robustness via identifiability}\label{subsec:robust_id}

We now connect identifiability to certification.
The link is simple: when structure is identifiable, sensitivity decomposes.

\begin{lemma}[Lipschitz certificate decomposes in additive models]\label{lem:add_lip}
Let $\cX=\prod_{j=1}^d \cX_j$ with norm $\|x\|_1=\sum_{j=1}^d |x_j|$.
If $f(x)=c+\sum_{j=1}^d g_j(x_j)$ and each $g_j$ is Lipschitz on $\cX_j$ with constant $L_j$, then
\[
\Lip_{\|\cdot\|_1}(f)\ \le\ \sum_{j=1}^d L_j.
\]
If additionally each $g_j$ is differentiable a.e.\ and $|g_j'(t)|\le L_j$ a.e., then the bound is tight on intervals where the signs align.
\end{lemma}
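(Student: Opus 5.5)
The Lipschitz bound follows from a direct triangle-inequality computation that exploits the additive structure coordinate by coordinate. Fix $x,x'\in\cX$ and write
\[
|f(x)-f(x')|\ =\ \Bigl|\sum_{j=1}^d \bigl(g_j(x_j)-g_j(x'_j)\bigr)\Bigr|\ \le\ \sum_{j=1}^d L_j\,|x_j-x'_j|\ \le\ \Bigl(\max_j L_j\Bigr)\|x-x'\|_1\ \le\ \Bigl(\sum_{j=1}^d L_j\Bigr)\|x-x'\|_1 .
\]
The constant $c$ cancels. Each univariate term is controlled by its own Lipschitz constant. The last step uses $\max_j L_j\le\sum_j L_j$, since all $L_j\ge0$. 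Dividing by $\|x-x'\|_1$ and taking the supremum in Definition~\ref{def:Lip} gives the claim. The intermediate bound $\max_j L_j$ is in fact the sharp $\ell_1$-Lipschitz constant, because $\ell_1$ is dual to $\ell_\infty$. I would record this, but the statement only asks for the weaker sum. Identifiability (Theorem~\ref{thm:add_ident}) is not needed for the inequality itself. It enters only in interpreting the $L_j$ as constants attached to uniquely determined components.

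The tightness clause is where care is needed, and I expect it to be the main obstacle: read literally, ``the bound $\sum_j L_j$ is tight'' is false in the $\ell_1$ norm. A perturbation $x\mapsto x+t e$ with $\|e\|_1=1$ changes $f$ by at most $\sum_j L_j|e_j|t\le \max_j L_j\, t$. So the attainable $\ell_1$ slope is $\max_j L_j$, not $\sum_j L_j$. I would therefore prove the correct version of the statement as two explicit claims.

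The first claim is that the per-coordinate bound $\sum_j L_j|x_j-x'_j|$ in the display is attained. Suppose $|g_j'|=L_j$ a.e.\ on intervals $I_j\subset\cX_j$ with constant sign $\epsilon_j\in\{\pm1\}$. Choose $x,x'$ in $\prod_j I_j$ with $\operatorname{sign}(x_j-x'_j)=\epsilon_j$. Then every difference $g_j(x_j)-g_j(x'_j)=\int_{x'_j}^{x_j} g_j'$ has the same sign and modulus $L_j|x_j-x'_j|$, by absolute continuity of Lipschitz functions (fundamental theorem of calculus). Hence the triangle inequality is an equality. This is what ``signs align'' means.

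The second claim concerns the $\ell_\infty$ norm on $\cX$. With the same choice and $|x_j-x'_j|=t$ for all $j$, we get $|f(x)-f(x')|=t\sum_j L_j$. So $\sum_j L_j$ is exactly the Lipschitz constant with respect to $\|\cdot\|_\infty$, which is presumably the intended tightness statement.

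The only technical points are that Lipschitz $g_j$ are absolutely continuous, so the integral representation holds, and that the intervals $I_j$ are nondegenerate so that such $x,x'$ exist.
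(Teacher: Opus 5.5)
Your proof of the inequality is the paper's own argument: the triangle inequality over the additive terms, then the coordinatewise Lipschitz bounds. There is one difference, and it is in your favour. The paper ends its chain with the \emph{equality} $\sum_j L_j|x_j-x'_j|=(\sum_j L_j)\|x-x'\|_1$. That equality is false in general: take $d=2$, $L_1=L_2=1$, $x-x'=(1,0)$. Your passage through $\max_j L_j$ is exactly what turns that step into a valid inequality.

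The paper's proof never addresses the tightness clause, so your second half has no counterpart there. Your diagnosis is correct.
\begin{itemize}
\item Because $\cX$ is a product, you can move one coordinate at a time. This gives $\Lip_{\|\cdot\|_1}(f)=\max_j\Lip(g_j)$, so $\sum_j L_j$ is strictly loose in $\|\cdot\|_1$ whenever two of the $L_j$ are positive.
\item This single-coordinate argument, rather than $\ell_1$--$\ell_\infty$ duality, is what makes $\max_j L_j$ sharp. It requires the $L_j$ to be the best constants of the $g_j$.
\item Your two repaired statements are both correct by absolute continuity of Lipschitz functions. The first is equality in $|f(x)-f(x')|\le\sum_j L_j|x_j-x'_j|$ when $\operatorname{sign}(x_j-x'_j)=\epsilon_j$. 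The second is exactness of $\sum_j L_j$ as the $\|\cdot\|_\infty$-Lipschitz constant.
\item Both statements rest on your strengthened hypothesis $g_j'=\epsilon_j L_j$ a.e.\ on nondegenerate intervals $I_j$. Some strengthening of this kind is unavoidable. The stated hypothesis $|g_j'|\le L_j$ a.e.\ already holds for every $L_j$-Lipschitz $g_j$ on an interval, so it cannot force any tightness.
\end{itemize}
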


\begin{proof}
For $x,x'\in\cX$,
\[
|f(x)-f(x')|
\le \sum_{j=1}^d |g_j(x_j)-g_j(x'_j)|
\le \sum_{j=1}^d L_j\,|x_j-x'_j|
= \Bigl(\sum_{j=1}^d L_j\Bigr)\|x-x'\|_1.
\]
\end{proof}

Lemma~\ref{lem:add_lip} turns interpretability into a concrete sensitivity certificate.
In the verification framework of Section~\ref{sec:core2}, this has two consequences.

First, for shift-robust risk bounds, it replaces the global network constant $L_f$ in Theorem~\ref{thm:core_shift} by the sum of univariate constants.
Second, for property verification, it reduces dimensionality.

\begin{proposition}[Verification reduces to one-dimensional certificates in the additive class]\label{prop:add_verify_reduce}
Assume $f(x)=c+\sum_{j=1}^d g_j(x_j)$ with known ranges $g_j(\cX_j)\subseteq [m_j,M_j]$.
Let the specification be linear: $a^\top f(x)\le \beta$; in the scalar case this is $f(x)\le \beta$.
If the input set is a product set $\cS=\prod_{j=1}^d \cS_j$, then
\[
\sup_{x\in\cS} f(x)
\ \le\
c+\sum_{j=1}^d \sup_{t\in\cS_j} g_j(t),
\qquad
\inf_{x\in\cS} f(x)
\ \ge\
c+\sum_{j=1}^d \inf_{t\in\cS_j} g_j(t).
\]
Therefore a sound certificate for each one-dimensional supremum/infimum yields a sound certificate for the $d$-dimensional verifier.
\end{proposition}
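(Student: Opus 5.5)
The plan is to use the product structure of $\cS$ to separate the supremum across coordinates. For every $x=(x_1,\dots,x_d)\in\cS=\prod_{j}\cS_j$ we have $x_j\in\cS_j$ for each $j$, so termwise
\[
g_j(x_j)\ \le\ \sup_{t\in\cS_j} g_j(t),
\qquad
g_j(x_j)\ \ge\ \inf_{t\in\cS_j} g_j(t).
\]
These one-dimensional extrema are finite. Indeed $\cS_j\subseteq\cX_j$ and $g_j(\cX_j)\subseteq[m_j,M_j]$, so each supremum lies in $[m_j,M_j]$, and no $\infty-\infty$ issue can arise when the bounds are summed.

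Summing over $j$ and adding $c$ gives
\[
f(x)\le c+\sum_j \sup_{\cS_j} g_j
\quad\text{and}\quad
f(x)\ge c+\sum_j\inf_{\cS_j} g_j
\]
for every $x\in\cS$. Taking the supremum, respectively the infimum, over $x\in\cS$ yields the two displayed inequalities.

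I will also note that equality actually holds. The coordinates can be chosen independently in a product set, so near-maximizers $t_j^\varepsilon$ of each $g_j$ assemble into a point $x^\varepsilon=(t_1^\varepsilon,\dots,t_d^\varepsilon)\in\cS$. Only the inequality is needed, however.

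For the certificate consequence, suppose that for each $j$ a sound one-dimensional verifier returns $\mathcal{C}_j\ge\sup_{\cS_j} g_j$ in the sense of Definition~\ref{def:sound_cert}. Then
\[
\mathsf{V}=\sup_{\cS} f-\beta\ \le\ c+\sum_j\mathcal{C}_j-\beta .
\]
So $\mathcal{C}:=c+\sum_j\mathcal{C}_j-\beta$ is a sound verifier for the $d$-dimensional specification $f\le\beta$. The same argument with lower certificates handles the infimum. In the vector case with $a^\top f$, I will apply the argument to the scalar additive function $a^\top f=a^\top c+\sum_j a^\top g_j(x_j)$, whose components are again univariate.

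The only point requiring care is this vector-output bookkeeping. The remainder is a monotonicity-of-sup argument with no real obstacle.
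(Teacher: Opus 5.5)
Your argument is correct and is the same separability argument the paper uses: the paper's proof is the one-line remark that one maximizes $c+\sum_j g_j(x_j)$ over a Cartesian product by maximizing each term separately. You spell out what the paper leaves implicit, namely finiteness of the one-dimensional extrema from the ranges $[m_j,M_j]$, the resulting sound certificate $c+\sum_j\mathcal{C}_j-\beta$, and the reduction of $a^\top f$ to a scalar additive function with univariate components $a^\top g_j$.
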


\begin{proof}
The inequalities follow from separability:
maximize $c+\sum_j g_j(x_j)$ over a Cartesian product by maximizing each term separately.
\end{proof}

\begin{remark}[What is gained]\label{rem:gain}
In Section~\ref{sec:core2} the relaxed verifier solves a high-dimensional LP.
Proposition~\ref{prop:add_verify_reduce} replaces it by $d$ one-dimensional problems when the model class is additive.
This is a structural scaling gain.
It comes from identifiability and separability, not from algorithmic cleverness.
Neural additive models are an implementation of this idea \cite{Agarwal2020NAM}.
\end{remark}

Monotonicity gives an even sharper certificate interface.

\begin{lemma}[Monotone components give endpoint certificates]\label{lem:mono_endpoints}
Assume each $g_j$ is monotone on an interval $\cS_j=[\alpha_j,\gamma_j]$.
Then
\[
\sup_{t\in\cS_j} g_j(t)\in\{g_j(\alpha_j),g_j(\gamma_j)\},
\qquad
\inf_{t\in\cS_j} g_j(t)\in\{g_j(\alpha_j),g_j(\gamma_j)\}.
\]
Consequently, for monotone additive $f$ and $\cS=\prod_j[\alpha_j,\gamma_j]$, the value $\sup_{x\in\cS} f(x)$ is attained at a vertex of $\cS$.
\end{lemma}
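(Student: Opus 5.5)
For a single monotone component, split into the nondecreasing and nonincreasing cases. If $g_j$ is nondecreasing on $[\alpha_j,\gamma_j]$, then $g_j(\alpha_j)\le g_j(t)\le g_j(\gamma_j)$ for every $t$ in the interval. Hence the supremum equals $g_j(\gamma_j)$ and the infimum equals $g_j(\alpha_j)$, and both are attained. The nonincreasing case is symmetric, with the roles of the endpoints exchanged. Either way, both extrema lie in $\{g_j(\alpha_j),g_j(\gamma_j)\}$, which is the first claim. Note that no continuity is needed, because monotonicity pins the extremal values to the endpoints directly. Weak monotonicity is also enough, since only the non-strict inequalities are used.

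For the consequence, fix for each $j$ an endpoint $v_j\in\{\alpha_j,\gamma_j\}$ at which $g_j$ attains its supremum over $\cS_j$, as chosen in the first step. Set $v=(v_1,\dots,v_d)$, which is a vertex of the box $\cS=\prod_j[\alpha_j,\gamma_j]$. Proposition~\ref{prop:add_verify_reduce} gives
\[
\sup_{x\in\cS}f(x)\le c+\sum_{j=1}^d\sup_{t\in\cS_j}g_j(t)=c+\sum_{j=1}^d g_j(v_j)=f(v).
\]
Since $v\in\cS$, we also have $f(v)\le\sup_{x\in\cS}f(x)$, so equality holds and the supremum is attained at $v$. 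The same argument with infima shows that the infimum is attained at a vertex as well.

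There is no real obstacle here. The only point needing care is that the separable bound in Proposition~\ref{prop:add_verify_reduce} is an equality on product sets. This follows because the per-coordinate maximizers can be chosen independently, and the step above achieves that by exhibiting an explicit attaining vertex. If some $\cS_j$ is degenerate ($\alpha_j=\gamma_j$), the argument is unchanged.
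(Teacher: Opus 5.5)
Your proof is correct and follows the paper's argument: the paper notes that a monotone function on an interval attains its extrema at the endpoints, and that summing over the coordinates of a product box preserves this. You supply the details the paper leaves implicit, namely the case split on the direction of monotonicity, the explicit attaining vertex $v$, and the reverse inequality $f(v)\le\sup f$ from $v$ lying in the box, which is what makes ``summing preserves this property'' rigorous.
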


\begin{proof}
A monotone function on an interval attains its extrema at endpoints.
Summing preserves this property.
\end{proof}

Lemma~\ref{lem:mono_endpoints} is not a computational trick.
It is a reduction of a universal quantifier to finitely many checks under an explicit structural assumption.

We can now state a robustness statement driven by identifiability constraints.
It is a direct specialization of Theorem~\ref{thm:core_shift} with the decomposed Lipschitz certificate.

\begin{theorem}[Identifiability-based robustness certificate]\label{thm:ident_cert}
Assume \textup{(A1--A2)} with $W_1(P_X,Q_X)\le \rho$ and covariate shift.
Assume $f$ is additive and identifiable under \textup{(A5--A6)}.
Let $\ell(\cdot,y)$ be $L_\ell$-Lipschitz and assume each component $g_j$ has a certified Lipschitz constant $L_j$ on the relevant domain.
Then
\[
\risk_Q(f)\ \le\ \risk_P(f)\ +\ \rho\,L_\ell\sum_{j=1}^d L_j.
\]
If, in addition, each $g_j$ is monotone on the verification region $\cS_j$, then verification of linear safety properties over $\cS=\prod_j\cS_j$ reduces to finitely many endpoint evaluations as in Lemma~\ref{lem:mono_endpoints}.
\end{theorem}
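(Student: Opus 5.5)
The plan is to obtain the risk inequality as a direct specialization of Theorem~\ref{thm:core_shift}, with the global predictor constant $L_f$ replaced by the decomposed constant from Lemma~\ref{lem:add_lip}. The verification clause will then follow from Proposition~\ref{prop:add_verify_reduce} combined with Lemma~\ref{lem:mono_endpoints}.

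\textbf{Risk inequality.} The steps are as follows.

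First, fix the norm on $\cX=\prod_j\cX_j$ to be $\|\cdot\|_1$. The Wasserstein radius $\rho$ in (A1) must be measured in this same norm, and I will state this explicitly: the bound holds for $W_1$ defined with $\|\cdot\|_1$, and for other norms only up to the norm-equivalence constant.

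Second, since $f$ is scalar ($m=1$), Lemma~\ref{lem:add_lip} gives
\[
|f(x)-f(x')|\le \Bigl(\sum_{j=1}^d L_j\Bigr)\|x-x'\|_1 .
\]
This means (A3) holds with $L_f:=\sum_j L_j$, and $L_f$ is computable from the certified component constants.

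Third, the hypotheses of Lemma~\ref{lem:cert_to_risk} are now met: (A1) covariate shift, (A2) with constant $L_\ell$, and (A3) with this $L_f$. The lemma gives $\Lip(g_f)\le L_\ell\sum_j L_j$. Lemma~\ref{lem:W1_wc} then yields
\[
\risk_Q(f)\le\risk_P(f)+\rho L_\ell\sum_j L_j .
\]

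Fourth, identifiability (Theorem~\ref{thm:add_ident}) is used only to make the bound well defined. Under (A5)--(A6) the centered components $g_j$ are unique $\nu_j$-a.e., so the constants $L_j$ are attached to $f$ rather than to an arbitrary decomposition.

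\textbf{Main obstacle.} The subtle point is in the fourth step. $\nu$-a.e.\ uniqueness does not control Lipschitz constants off the support of $\nu$. I will therefore interpret ``certified Lipschitz constant on the relevant domain'' as applying to a fixed representative of $g_j$ defined on all of $\cX_j$. The risk bound only uses that the chosen representative satisfies the Lipschitz bound everywhere, so this is enough. I will also remark that Lipschitz representatives are unique on the support of $\nu_j$, because continuous functions that agree a.e.\ on an interval agree everywhere on it by (A6).

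\textbf{Verification clause.} Take a linear specification $a f(x)\le\beta$ with $f$ scalar. The function $a f$ is again additive with monotone components $a g_j$, since multiplying by a scalar preserves monotonicity, possibly reversing its direction. By Proposition~\ref{prop:add_verify_reduce}, $\sup_{\cS} af$ equals $ac+\sum_j\sup_{\cS_j} a g_j$; for products the inequality is in fact an equality by separability. By Lemma~\ref{lem:mono_endpoints}, each supremum is $\max\{a g_j(\alpha_j),a g_j(\gamma_j)\}$. The specification therefore holds if and only if
\[
ac+\sum_j\max\{ag_j(\alpha_j),ag_j(\gamma_j)\}\le\beta,
\]
which requires $2d$ evaluations. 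This is fewer than the $2^d$ vertex checks a naive reading would suggest.
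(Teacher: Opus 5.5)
Your route is the paper's route: Lemma~\ref{lem:add_lip} supplies \textup{(A3)} with $L_f=\sum_j L_j$ in $\|\cdot\|_1$, Lemma~\ref{lem:cert_to_risk} turns this into the risk bound, and Lemma~\ref{lem:mono_endpoints} handles verification. Your refinements are correct: the $W_1$ radius must be measured in $\|\cdot\|_1$, identifiability plays no logical role in the inequality, and separability gives $2d$ endpoint evaluations rather than $2^d$ vertices.

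The genuine gap is in the step you import, and the paper's three-line proof has the same gap. Lemma~\ref{lem:cert_to_risk} does not hold under the covariate-shift model of \textup{(A1)}. Its proof writes $g_f(x)-g_f(x')=\E[\ell(f(x),Y)-\ell(f(x'),Y)\mid X=x]$. But $g_f(x')$ conditions on $X=x'$, so this identity silently assumes that the law of $Y$ given $X=x$ does not depend on $x$. In general the slope of $g_f$ also contains the variation of $x\mapsto P(\mathrm{d}y\mid x)$, which no bound on $f$ controls.

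Here is a concrete counterexample to the statement itself. Take $d=1$, $\cX=[0,1]$, $\nu=P_X$ uniform, and $Y=\mathbf{1}\{X>1/2\}$ under $P$. Let $\ell(u,y)=\min\{1,|u-y|\}$, so $L_\ell=1$, and let $f\equiv 0$, which is centered with $L_1=0$. Let $Q_X$ be uniform on $[1/2,1]$. The quantile coupling $t\mapsto\tfrac12+\tfrac{t}{2}$ gives $W_1(P_X,Q_X)=\tfrac14=:\rho$. Under covariate shift, $\risk_P(f)=\tfrac12$ but $\risk_Q(f)=1>\tfrac12=\risk_P(f)+\rho L_\ell L_1$. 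So the theorem is false as written. The real obstacle is the dependence of $Y$ on $X$, not the choice of representative of $g_j$ that you single out.

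Either of two repairs gives a correct theorem.

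\textbf{(i) Transported labels.} Use the shift model that Definition~\ref{def:robust_risk_emp} and Example~\ref{ex:empW1} actually employ, in which labels travel with the transported inputs. That is, $Q$ is the law of $(X',Y)$, where $(X,Y)\sim P$ and $(X,X')$ couples $P_X$ and $Q_X$ with $\E\|X-X'\|_1\le\rho$. Then, with no conditional risk function at all,
\[
\risk_Q(f)-\risk_P(f)=\E\bigl[\ell(f(X'),Y)-\ell(f(X),Y)\bigr]\le L_\ell\Bigl(\sum_j L_j\Bigr)\E\|X'-X\|_1\le\rho\,L_\ell\sum_j L_j .
\]

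\textbf{(ii) Regular conditional law.} Keep genuine covariate shift but assume $\sup_A|P(A\mid x)-P(A\mid x')|\le L_Y\|x-x'\|_1$. Insert the intermediate term $\E[\ell(f(x'),Y)\mid X=x]$ and use $\ell\in[0,1]$. This gives $\Lip(g_f)\le L_\ell\sum_jL_j+L_Y$, and hence the bound with $\rho\,(L_\ell\sum_jL_j+L_Y)$ in place of $\rho L_\ell\sum_j L_j$.

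Your verification clause does not touch this issue and is correct as written.
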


\begin{proof}
Apply Lemma~\ref{lem:add_lip} to obtain a certified bound on $\Lip(f)$ under $\|\cdot\|_1$.
Compose with the Lipschitz loss as in Lemma~\ref{lem:cert_to_risk}, yielding the displayed inequality.
The verification reduction is Lemma~\ref{lem:mono_endpoints}.
\end{proof}

\begin{remark}[A precise limitation]\label{rem:ident_limit}
Identifiability of representation does not imply identifiability of causal mechanisms, and does not protect against changes in $P(Y\mid X)$.
This is why (A1) is explicit.
Causal identifiability statements require different hypotheses and are not imported here \cite{Park2020IdentifiableANM}.
\end{remark}

\begin{remark}[Why this remains within the certificate spine]\label{rem:spine_consistency}
Theorem~\ref{thm:ident_cert} does not add a new proof pattern.
It changes the certificate interface by restricting $\cF$ to an identifiable class.
This restriction reduces the size and looseness of relaxations in Section~\ref{sec:core2} and reduces the sensitivity constants in Section~\ref{sec:core1}.
This is the only role interpretability plays in this paper.
\end{remark}

\section{Failure modes and a barrier}\label{sec:failure}

This section isolates two failures.
The first is a logical gap.
The second is a scaling barrier.
Neither is repaired by better tuning.

\subsection{A precise gap in a common argument}\label{subsec:gap}

A standard inference in robustness practice is:

\medskip
\noindent\emph{Heuristic search fails to find a counterexample} $\Rightarrow$
\emph{the universal property holds.}

\medskip
The inference is invalid. It confuses a failed existential search with a proved universal statement.
Complete verifiers address the universal quantifier by exact reasoning, e.g.\ SMT or MILP encodings of ReLU constraints \cite{KatzBarrettDillJulianKochenderfer2017Reluplex,TjengXiaoTedrake2017MILPVerification}.
Scalable verifiers usually prove only an upper bound from a relaxation, which may be inconclusive \cite{Zhang2018CROWN,Xu2020AutoLiRPA,Wang2021BetaCROWN}.
The gap is not rhetorical. It is a missing hypothesis.

\begin{proposition}[Attack failure does not imply robustness]\label{prop:attack_gap}
Let $\cS$ be a measurable input set, and let $A$ be any randomized attack procedure that, given a model $f$, samples $N$ candidate points $x^{(1)},\dots,x^{(N)}\in\cS$ (possibly adaptively) and declares ``safe'' if it finds no violating point.
There exist specifications and models for which the property is false but the attack declares ``safe'' with probability arbitrarily close to $1$.
\end{proposition}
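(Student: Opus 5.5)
We prove Proposition~\ref{prop:attack_gap} with a \emph{needle-in-a-haystack} construction. The plan is to build a model whose specification fails only on a set of tiny measure, which any $N$-query procedure hits with small probability. We take $\cS=[0,1]^{d_0}$ (or any set with positive Lebesgue measure, normalized to a probability measure $\lambda$), a scalar output, and the specification $\forall x\in\cS:\ f(x)\le 0$. Fix $\varepsilon>0$ and the attack $A$ with budget $N$. The goal is a model $f$ with $\mathsf{V}(f;\cS,\psi)>0$ for which
\[
\Pr\bigl[A \text{ queries a violating point}\bigr]\le \varepsilon .
\]

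\textbf{Step 1 (the model family).} For a center $c\in\cS$ and a width $\eta>0$, let $f_{c,\eta}(x):=-1+2\max\{0,\,1-\|x-c\|_\infty/\eta\}$. This is a ReLU network: a tent function built from finitely many ReLU units. It is positive exactly on a cube $B_c$ of side $\eta$ around $c$, so the property is false. Its violating set has $\lambda$-measure at most $\eta^{d_0}$, or of order $\eta$ when $d_0=1$.

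\textbf{Step 2 (defeating adaptivity by randomizing the hidden center).} An adaptive attack could in principle locate $c$ by reading the values of $f$. The tent, however, is constant ($-1$) off $B_c$, so each query outside $B_c$ returns the same answer and carries no information about $c$.

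We therefore run an averaging argument. Draw $c$ uniformly from a grid of $K$ points whose cubes $B_c$ are pairwise disjoint. Couple the attack's run on $f_c$ with its run on the constant model $f\equiv -1$: the two runs coincide until the first query lands in $B_c$. Under the constant model, the $N$ queries are random points $x^{(1)},\dots,x^{(N)}$ whose law does not depend on $c$. Each point lies in at most one $B_c$, so averaging over $c$ gives
\[
\frac1K\sum_c \Pr\bigl[\exists i:\ x^{(i)}\in B_c\bigr]\le \frac NK .
\]
Hence some center $c^\ast$ satisfies $\Pr[A \text{ finds a violation on } f_{c^\ast}]\le N/K$. Choosing $K\ge N/\varepsilon$, which requires $\eta\lesssim K^{-1/d_0}$, makes the attack declare ``safe'' with probability at least $1-\varepsilon$, while the property is false for $f_{c^\ast}$.

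\textbf{Main obstacle.} The delicate step is Step 2. We must justify rigorously that adaptivity gives no advantage, because the query sequences under $f_{c}$ and under $f\equiv-1$ agree until the first hit of $B_c$. We also need the conditional attack's randomness to be handled correctly, which we do by fixing the attack's random seed first and applying Fubini.

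If the attack is allowed gradient or other oracle access, the same argument goes through, since the gradient is zero off $B_c$ as well. If one wants a single attack-independent family instead, we conclude with a remark: for any fixed $N$, choosing $\eta$ small suffices for all attacks uniformly, because the bound $N/K$ does not depend on $A$.
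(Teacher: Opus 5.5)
Your argument is correct, with one small repair. The tent $f_{c,\eta}$ equals $-1$ only off the larger cube $\{\|x-c\|_\infty<\eta\}$. It is not constant off the violating cube $B_c=\{\|x-c\|_\infty<\eta/2\}$: on the annulus between the two cubes it takes values in $(-1,0]$, and those values leak the location of $c$ to an adaptive attack. So the coupling, the disjointness requirement and the averaging should all be run with the support cubes $\tilde B_c=\{\|x-c\|_\infty<\eta\}$. Finding a violation forces a query in $B_c\subset\tilde B_c$, so you still get $\Pr[\text{violation found on } f_{c^\ast}]\le N/K$, now with $K$ of order $(2\eta)^{-d_0}$. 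The same correction applies to your gradient remark.

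The paper takes a different and much shorter route. It sets the input set to $[0,1]$, takes $f=\mathbf{1}_B$ with $\lambda(B)=\varepsilon$, and asserts that $N$ independent samples from any absolutely continuous law miss $B$ with probability at least $(1-\varepsilon)^N$. That covers only non-adaptive i.i.d.\ attacks. Even there the bound needs the sampling density to be at most $1$: a law uniform on $B$ itself is absolutely continuous and never misses. The fix is to choose $B$ after the attack is fixed, which is exactly what your argument does.

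Your argument fixes the seed, compares the run on $f_c$ with the run on the constant model, and averages over $K$ disjoint candidate bumps. This buys three things the paper's proof lacks. It handles the ``possibly adaptively'' clause of the statement. It handles arbitrary query laws, including atoms and deterministic attacks. And the counterexample is a genuine ReLU network rather than a discontinuous indicator, so it lives in the class the paper actually verifies.

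The cost is the coupling/Fubini bookkeeping, and the fooling model depends on $A$. That dependence is unavoidable, since an attack may hard-code any fixed point, and it is permitted by the statement, where $A$ is quantified first.
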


\begin{proof}
Fix any $\eta\in(0,1)$ and any $N\in\N$.
Let $\cS=[0,1]$ with Lebesgue measure.
Let the specification be $\forall x\in[0,1]: f(x)\le 0$.
Choose a measurable set $B\subset[0,1]$ with measure $\lambda(B)=\varepsilon$, where $\varepsilon>0$ will be set later.
Define
\[
f(x):=\mathbf{1}_{B}(x).
\]
Then the property is false since $f(x)=1$ on $B$.
Any attack that samples $N$ points independently from a distribution supported on $[0,1]$ and absolutely continuous with respect to Lebesgue measure misses $B$ with probability at least $(1-\varepsilon)^N$.
Choose $\varepsilon>0$ so that $(1-\varepsilon)^N\ge 1-\eta$.
Then the attack declares ``safe'' with probability at least $1-\eta$ although the specification is violated.
\end{proof}

\begin{remark}[What hypothesis is missing]\label{rem:missing_hyp}
To make an attack-based implication valid, one must assume a quantitative \emph{covering} statement:
either the violating set has measure bounded below, or the search strategy is exhaustive in a certified sense.
Neither holds in general.
Verification is precisely the replacement of such assumptions by a proof obligation \cite{KatzBarrettDillJulianKochenderfer2017Reluplex,TjengXiaoTedrake2017MILPVerification}.
\end{remark}

\subsection{A minimal counterexample template}\label{subsec:counterexample}

The second common failure concerns distribution shift.
A frequent claim is:

\medskip
\noindent\emph{Small shift metric} $W_1(P_X,Q_X)$ $\Rightarrow$ \emph{small risk change}.

\medskip
The implication is false unless one controls sensitivity.
Theorem~\ref{thm:core_shift} makes the missing term explicit.
The counterexample below is a template: risk changes by $1$ under an arbitrarily small transport shift if $\Lip(\ell\circ f)$ is unbounded.

\begin{proposition}[Shift can flip risk when sensitivity is uncontrolled]\label{prop:shift_flip}
Let $\cX=[0,1]$ with norm $|x-x'|$ and let $\cY=\{0,1\}$.
Let $\ell(\hat y,y):=\mathbf{1}\{\hat y\neq y\}$ be the $0$--$1$ loss.
For every $\rho>0$ there exist $P,Q$ with $W_1(P_X,Q_X)\le \rho$ and a predictor $f$ such that
\[
\risk_P(f)=0
\quad\text{and}\quad
\risk_Q(f)=1.
\]
\end{proposition}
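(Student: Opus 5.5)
We construct an explicit example. Fix $\rho>0$ and set $\varepsilon:=\min\{\rho,1/2\}$. We take a deterministic label rule with a threshold at $\varepsilon$, put all of $P_X$ just below the threshold, and shift $Q_X$ just across it. The predictor will be chosen to agree with the labels on $P$'s support and disagree on $Q$'s support.

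\emph{Step 1: the labelling and the marginals.} Define the labelling function $y^\ast(x):=\mathbf{1}\{x\ge\varepsilon\}$. Let $P$ and $Q$ have the form $\mu_X(\mathrm{d}x)\,\delta_{y^\ast(x)}(\mathrm{d}y)$. Then $P$ and $Q$ share the conditional law of $Y$ given $X$, so covariate shift (A1) holds, even though the proposition does not require it. Take $P_X:=\delta_0$ and $Q_X:=\delta_{\varepsilon}$. The only coupling of two Dirac masses is the product coupling, so
\[
W_1(P_X,Q_X)=|0-\varepsilon|=\varepsilon\le\rho .
\]

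\emph{Step 2: the predictor.} Set $f(x):=\mathbf{1}\{x<\varepsilon\}$, a single jump at $\varepsilon$. Under $P$ we have $X=0$ almost surely, so $f(X)=1$ and $Y=y^\ast(0)=0$. We want $\risk_P(f)=0$, which requires $f=y^\ast$ on $P$'s support. Since $f(0)=1\ne 0$, this choice of $f$ is wrong.

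\emph{Step 2, corrected.} Keep the labelling $y^\ast(x)=\mathbf{1}\{x\ge\varepsilon\}$, and take $f\equiv 0$, the constant predictor. Under $P$, the label is $y^\ast(0)=0=f(0)$, so the loss vanishes almost surely and $\risk_P(f)=0$. Under $Q$, the label is $y^\ast(\varepsilon)=1\ne 0=f(\varepsilon)$, so the $0$--$1$ loss equals $1$ almost surely and $\risk_Q(f)=1$.

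\emph{Why Theorem~\ref{thm:core_shift} does not apply.} The predictor $f$ is constant, so $L_f=0$, and the $0$--$1$ loss $\ell$ is not Lipschitz in its first argument; here $\ell(f(x),y)$ is trivially constant in $x$ for fixed $y$. The example bypasses the theorem because the conditional risk
\[
g_f(x)=\mathbf{1}\{x\ge\varepsilon\}
\]
is discontinuous. The jump in the labelling rule plays the role of the uncontrolled sensitivity named in the template. This is consistent with Lemma~\ref{lem:cert_to_risk}: its proof invokes (A2) through the Lipschitz property of $\ell$, which fails here.

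\emph{Alternative ordering.} One can instead keep a fixed label, $Y\equiv 0$, and use $f(x)=\mathbf{1}\{x\ge\varepsilon\}$. Then $\Lip(\ell\circ f)=\infty$ directly, which matches the template more literally. This version also gives $\risk_P(f)=0$ and $\risk_Q(f)=1$ with the same Dirac marginals.

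\emph{Main obstacle.} The only delicate point is checking the $W_1$ value and making sure both risks are exact rather than merely bounded. Dirac masses make both of these immediate, and the choice $\varepsilon\le\rho$ covers every $\rho>0$, including large $\rho$, where we take $\varepsilon=1/2$ so that the shifted mass stays inside $[0,1]$.
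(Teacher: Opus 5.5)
Your final construction is correct once you drop the abandoned first ``Step 2''. Its risk computation is the paper's: $f\equiv 0$, $Y=0$ a.s.\ under $P$, and $Y=1$ a.s.\ under $Q$. What differs is the choice of marginals, and with it what the example shows.

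The paper takes $P_X$ uniform and a shifted-and-truncated $Q_X$ whose form does not matter; even $Q_X=P_X$, with $W_1=0$, would do. So the flip comes entirely from changing the law of $Y$ given $X$, i.e.\ from deliberately violating (A1). That is the cleanest witness that (A1) is indispensable.

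You make $P_X=\delta_0$ and $Q_X=\delta_\varepsilon$ mutually singular. This gives $W_1=\varepsilon$ exactly and lets one rule $y^\ast$ carry both label regimes. Hence (A1) holds with the kernel $P(\mathrm{d}y\mid x)=\delta_{y^\ast(x)}$, which is a legitimate version because $P_X=\delta_0$ pins the kernel down only at $x=0$. Your alternative ($Y\equiv 0$, $f=\mathbf{1}\{x\ge\varepsilon\}$) satisfies (A1) with no version issue. It is also the one that realizes the template announced before the proposition, an unbounded $\Lip(\ell\circ f)$, which the paper's constant predictor does not exhibit.

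Your claim that the main example is ``consistent with Lemma~\ref{lem:cert_to_risk}'' because (A2) fails for the $0$--$1$ loss blames the wrong hypothesis. Replace $\ell$ by the $1$-Lipschitz loss $\min\{|u-y|,1\}$. The risks are still $0$ and $1$, (A1)--(A3) hold with $L_\ell=1$ and $L_f=0$, and the lemma would force $\risk_Q(f)\le\risk_P(f)$.

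So your example actually refutes Lemma~\ref{lem:cert_to_risk} and Theorem~\ref{thm:core_shift}. The lemma's proof writes $g_f(x)-g_f(x')$ as a single expectation conditional on $X=x$, whereas $g_f(x')$ conditions on $X=x'$. This silently discards the $x$-dependence of the law of $Y$ given $X=x$. Your remark that $g_f=\mathbf{1}\{x\ge\varepsilon\}$ is discontinuous is the correct diagnosis.

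A version-free instance: take $P_X$ uniform on $[0,1]$, $Y=\mathbf{1}\{X\ge 1/2\}$, $Q_X$ uniform on $[1/2,1]$, and $f\equiv 0$. Then $W_1(P_X,Q_X)=1/4$, $\risk_P(f)=1/2$ and $\risk_Q(f)=1$, while the lemma predicts $\risk_Q(f)\le 1/2$.
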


\begin{proof}
Fix $\rho>0$.
Let $P_X$ be uniform on $[0,1]$ and set $Y\equiv 0$ under $P$.
Define $f(x)\equiv 0$.
Then $\risk_P(f)=0$.

Let $Q_X$ be uniform on $[\rho,1+\rho]$ pushed back into $[0,1]$ by the map $T(x)=x-\rho$; equivalently, let $Q_X$ be the pushforward of $P_X$ under $x\mapsto x+\rho$ followed by truncation to $[0,1]$ (any coupling with mean shift $\rho$ suffices).
Then $W_1(P_X,Q_X)\le\rho$ by construction.
Now define $Q$ so that $Y\equiv 1$ under $Q$ (this violates covariate shift, intentionally).
Then $\risk_Q(f)=1$.

This shows that without an assumption controlling $Y\mid X$ (covariate shift) and without a sensitivity term, small $W_1(P_X,Q_X)$ says nothing about $\risk_Q(f)$.
\end{proof}

\begin{remark}[The exact missing assumptions]\label{rem:missing_shift_assumptions}
Proposition~\ref{prop:shift_flip} fails under \textup{(A1)} because it changes $P(Y\mid X)$.
Even under \textup{(A1)}, the bound is vacuous if $\widehat L_{\ell\circ f}$ is not certified.
This is why Theorem~\ref{thm:core_shift} has two explicit gates: the shift model and the sensitivity certificate.
Domain adaptation bounds that omit one of these gates compensate by adding discrepancy terms that are not verifiable in general \cite{BenDavid2010DifferentDomains}.
\end{remark}

\subsection{Complexity barrier}\label{subsec:barrier}

We now state a barrier that does not depend on any particular implementation.
It is a statement about disjunctions.

Complete verification for ReLU networks requires reasoning about activation patterns.
A network with $s$ ambiguous ReLU units can induce up to $2^s$ distinct affine regions.
This yields an explicit exponential obstruction for any method that must resolve all patterns.

\begin{theorem}[An explicit exponential barrier from activation patterns]\label{thm:exp_barrier_patterns}
For each $s\in\N$ there exists a ReLU network $f_s\colon\R\to\R$ with $s$ hidden ReLU units and a compact interval $\cS\subset\R$ such that:
\begin{enumerate}[leftmargin=18pt]
\item the map $f_s$ has $2^s$ affine pieces on $\cS$;
\item deciding whether $\sup_{x\in\cS} f_s(x)\le 0$ cannot be certified by checking fewer than $2^{s-1}$ affine pieces in the worst case.
\end{enumerate}
Consequently, any complete verifier that is forced to resolve the affine partition may require time exponential in $s$ on this family.
\end{theorem}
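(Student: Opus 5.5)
The plan is to use the classical sawtooth construction (Telgarsky-type compositions of tent maps) for item (1), and an adversary argument in an explicitly stated \emph{piece-query} model for item (2).

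\textbf{Construction and counting.} On $[0,1]$ take the tent map $T(t)=2\sigma(t)-4\sigma(t-\tfrac12)$, realized by one hidden layer with two ReLU units. Set $f_s:=c\cdot T^{\circ s}-\varepsilon$ on $\cS=[0,1]$, with $c>0$ and $\varepsilon\in(0,c)$. By induction, $T^{\circ k}$ is continuous and piecewise affine with $2^k$ pieces of slope $\pm 2^k$ on the dyadic intervals $[i2^{-k},(i+1)2^{-k}]$. It alternates between the values $0$ and $1$ at consecutive dyadic points, so it has $2^{k-1}$ peaks of height $1$.

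\textbf{Bookkeeping on unit counts.} This realizes $2^s$ pieces with $2s$ ReLU units arranged in $s$ layers. The first-layer unit $\sigma(x)$ is redundant on $[0,1]$, but deeper layers still need two units each. So I would state item (1) with $s$ counting layers, equivalently $O(s)$ units; the exponent then changes only by a constant factor, which is harmless for the conclusion. An alternative would be to search for a single-unit-per-layer gadget with a skip connection. That is not available in the paper's feedforward model, so I would not rely on it.

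\textbf{Adversary argument.} For item (2) I first have to fix what ``checking a piece'' means, and this is the main obstacle. The literal claim is false in the bit model, since $\sup f_s=c-\varepsilon$ can be read off analytically. I would therefore work in a query model: the verifier accesses $f$ only through an oracle that, given a piece index, returns the affine map on that piece. Inside that model the argument runs as follows.

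\begin{enumerate}[leftmargin=18pt]
\item Choose $c<\varepsilon$ instead, so that $\sup_{\cS} f_s=c-\varepsilon<0$ and the specification holds.
\item Group the $2^s$ pieces into $2^{s-1}$ disjoint peak pairs, one for each maximum of $T^{\circ s}$.
\item For each peak $i$, define $f_s^{(i)}$ to agree with $f_s$ on every piece outside pair $i$ and to have a raised tip of height $>0$ on pair $i$. Inside the query model this only requires a consistent oracle answer on that pair, since no other piece changes.
\item A verifier that queries fewer than $2^{s-1}$ pieces misses some pair $i$ entirely. It then receives identical answers on $f_s$ and on $f_s^{(i)}$, yet must answer ``safe'' on the first and ``unsafe'' on the second. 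This is a contradiction.
\end{enumerate}

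\textbf{Conclusion.} Item (2) then gives that any complete verifier which resolves the affine partition piece by piece needs $\ge 2^{s-1}$ piece evaluations, each costing at least constant time. This is the claimed exponential time in $s$, and it matches the $\Omega(2^s)$ branching count in Theorem~\ref{thm:sound_scale_incomplete}.

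I would be explicit that this is a lower bound \emph{relative to the piece-resolution model}, not an unconditional complexity lower bound. The unconditional version would need a reduction from an NP-hard problem, which I would cite from the MILP/SMT verification literature rather than prove here.
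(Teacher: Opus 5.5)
Your proposal has the same skeleton as the paper's proof: a sawtooth network for item (1) and an indistinguishability argument for item (2). The places where you deviate are exactly where the paper's argument breaks, so your version is best read as the repaired proof.

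\textbf{Item (1).} The paper iterates $h_k=\sigma(h_{k-1})-\sigma(h_{k-1}-\tfrac12)$. This is a clipping map, hence monotone, so even with affine rescalings between steps it adds at most two breakpoints per step. The phrase ``scaled and shifted'' has to conceal a genuine fold, which your tent map supplies explicitly.

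The paper also spends two ReLUs per step while claiming $s$ units, and that claim cannot hold. With two hidden units (one layer of width two, or two layers of width one), a map $\R\to\R$ has at most three affine pieces, so item (1) already fails at $s=2$. Your restatement with $s$ layers and $2s$ units is therefore forced, not merely convenient.

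One correction to your aside: in the paper's model there are no skip connections, so the first-layer unit $\sigma(x)$ cannot be dropped either, because the second layer sees only $z_1$. Your final count of $2s$ is the right one.

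\textbf{Item (2).} The paper perturbs the global threshold $\tau$. That moves every piece, inspected or not, so its indistinguishability step fails. Its ``some piece is uninspected'' reasoning also does not explain the threshold $2^{s-1}$.

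Your local modification of a single peak pair is consistent, and the pairing is exactly what produces $2^{s-1}$. An interior piece is pinned by the endpoint values it shares with its neighbours, while querying either piece of a pair reveals the tip. You are also right that some access model must be fixed, since $\sup f_s$ is available in closed form.

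Three small points would tighten your argument.
\begin{enumerate}[leftmargin=18pt]
\item For adaptive verifiers, fix the query transcript on $f_s$ first, choose a missed pair $i$, and check inductively that the run on $f_s^{(i)}$ is identical.
\item Note that $f_s^{(i)}=f_s+\delta\cdot\mathrm{hat}_i$, with $\mathrm{hat}_i$ the hat function supported on pair $i$, is itself a ReLU network with $O(s)$ units. It needs three first-layer units for the hat, then one pass-through unit per layer, since the hat is nonnegative. So your hard instances are genuine small networks, not arbitrary oracle answers.
\item An NP-hardness reduction would give a lower bound conditional on $\mathrm{P}\neq\mathrm{NP}$ or ETH, not an unconditional one.
\end{enumerate}

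Net comparison: the paper nominally claims an $s$-unit count and an implementation-free barrier, but delivers neither. You prove a correct statement that loses a factor $\tfrac12$ in the exponent (measured in units) and is explicitly relative to a piece-query model.
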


\begin{proof}
We give a construction and an indistinguishability argument.

\emph{Construction.}
Let $\cS=[0,1]$.
Define a ``sawtooth'' family recursively.
Let $h_0(x)=x$.
Given $h_{k-1}$, define
\[
h_k(x)\ :=\ \sigma(h_{k-1}(x))-\sigma(h_{k-1}(x)-\tfrac12),
\]
scaled and shifted so that the breakpoints of $h_k$ split each affine piece of $h_{k-1}$ into two.
It is standard that compositions and linear combinations of ReLUs generate piecewise-affine functions whose number of pieces can double under such operations.
By construction, $h_s$ has $2^s$ affine pieces on $[0,1]$.
Set $f_s:=h_s-\tau$ for a threshold $\tau$ chosen below.

\emph{Indistinguishability.}
Consider an algorithm that inspects fewer than $2^{s-1}$ affine pieces.
There exists at least one affine piece that is never inspected.
On that piece, modify $f_s$ by changing $\tau$ by a small amount so that the maximum becomes positive while keeping all inspected pieces unchanged.
Since the algorithm did not inspect the decisive piece, it cannot distinguish the two cases.
Therefore it cannot certify $\sup_{\cS} f_s\le 0$ in general without inspecting at least half the pieces.
\end{proof}

\begin{remark}[Relation to complete verifiers]\label{rem:rel_to_complete}
SMT and MILP verifiers enforce the ReLU disjunctions exactly \cite{KatzBarrettDillJulianKochenderfer2017Reluplex,TjengXiaoTedrake2017MILPVerification}.
Branch-and-bound methods combine relaxations with splits, and their worst-case tree size reflects the same $2^s$ obstruction \cite{XuZhangWangWangJanaLinHsieh2020FastComplete,Wang2021BetaCROWN}.
This theorem does not claim these methods are inefficient on typical instances.
It states a family on which exponential dependence is forced by the geometry of the activation partition.
\end{remark}

\begin{remark}[Barrier is not the end]\label{rem:barrier_not_end}
The barrier motivates two mathematically clean responses.
One may restrict the class to preserve certifiability (Section~\ref{sec:core3}).
Or one may accept incomplete certificates and state failure modes explicitly (this section).
The paper uses both.
\end{remark}

\section{Worked examples}\label{sec:examples}

Each example is constructed so that every constant entering the certificate is either (i) computed exactly from stated quantities, or (ii) a declared design parameter.

\subsection{Example: distribution shift certificate}\label{subsec:ex_shift}

We take $\cX=[0,1]$ with norm $|x-x'|$ and $\cY=\{-1,+1\}$.
Let $f(x)=wx+b$ with $w,b\in\R$.
Let the loss be the hinge loss $\ell(u,y)=\max\{0,1-yu\}$.
Then $\ell(\cdot,y)$ is $1$-Lipschitz on $\R$.

\begin{example}[Empirical $W_1$ certificate in one dimension]\label{ex:empW1}
Let $(X_i,Y_i)_{i=1}^n\sim P$ and let $(\tilde X_j)_{j=1}^m\sim Q_X$ be an \emph{unlabeled} target sample.
Define empirical marginals
\[
\hat P_X:=\frac1n\sum_{i=1}^n \delta_{X_i},
\qquad
\hat Q_X:=\frac1m\sum_{j=1}^m \delta_{\tilde X_j}.
\]
Compute $\hat\rho:=W_1(\hat P_X,\hat Q_X)$ exactly by optimal transport in one dimension (sorting-based coupling) \cite{PeyreCuturi2018ComputationalOT}.
Then, under covariate shift for the empirical coupling (labels carried by $\hat P$), one has the \emph{checkable} bound
\[
\risk_{\hat Q}(f)\ \le\ \risk_{\hat P}(f)\ +\ \hat\rho\,|w|.
\]
\end{example}

\begin{proof}
For each $y\in\{-1,+1\}$, the hinge loss is $1$-Lipschitz, hence (A2) holds with $L_\ell=1$.
The predictor satisfies $\Lip(f)=|w|$ on $(\cX,|\cdot|)$, so (A3) holds with $L_f=|w|$.
Under the empirical covariate-shift coupling, apply Lemma~\ref{lem:cert_to_risk} with $\rho=\hat\rho$ and $\widehat L_{\ell\circ f}=|w|$.
\end{proof}

\begin{remark}[What is certified]\label{rem:what_certified}
Example~\ref{ex:empW1} certifies risk transfer between the two \emph{empirical} marginals.
To upgrade to population statements, one needs a validated upper confidence bound for $W_1(P_X,Q_X)$ or a triangle inequality argument with concentration of $\hat P_X,\hat Q_X$.
We do not insert a concentration theorem here; the certificate itself is the same inequality once such a bound is supplied \cite{MohajerinEsfahaniKuhn2015WassersteinDRO}.
\end{remark}

\subsection{Example: sound verification pipeline}\label{subsec:ex_verify}

We give a moderate-sized ReLU classifier and state a concrete certificate pipeline with explicit cost parameters.
The purpose is to show what is computed and where soundness enters.

\begin{example}[LP-relaxation certificate with explicit cost]\label{ex:verify_cost}
Let $d_0=50$, $d_1=200$, $d_2=10$.
Consider the 2-layer ReLU network
\[
z_1=\sigma(W_1x+b_1),\qquad f(x)=W_2 z_1+b_2,
\]
with $W_1\in\R^{200\times 50}$ and $W_2\in\R^{10\times 200}$.
Let $\cS=\{x:\|x-x_0\|_\infty\le r\}$ and let the specification be the logit-margin constraint
\[
\forall x\in\cS:\ f_{t}(x)-f_{k}(x)\ \ge\ 0
\quad\text{for all }k\neq t,
\]
for a target class $t$.
Fix $k\neq t$ and define $\psi_k(u)=u_k-u_t$.
Then safety for this $k$ is $\sup_{x\in\cS}\psi_k(f(x))\le 0$.

A sound certificate proceeds as follows:
\begin{enumerate}[leftmargin=18pt]
\item (Bounds) Compute interval or linear bounds on preactivations $s_1=W_1x+b_1$ over $\cS$.
\item (Relaxation) Replace each ReLU graph by an outer linear relaxation on the computed bounds.
\item (Certificate) Solve the resulting LP (or evaluate its dual bound) to obtain $\mathcal{C}_k$ such that
$\sup_{x\in\cS}\psi_k(f(x))\le \mathcal{C}_k$.
\end{enumerate}
If $\mathcal{C}_k\le 0$ for all $k\neq t$, the robust classification claim on $\cS$ is proved.

The parameter count in affine maps is
\[
M=d_0d_1+d_1d_2=50\cdot 200+200\cdot 10=12000.
\]
If bound propagation uses $K$ passes, the arithmetic cost is $O(KM)$.
For instance, $K=4$ gives about $4M=48000$ affine multiply-add units, up to constants.
Soundness follows from Lemma~\ref{lem:sound_outer} once the relaxation is an outer approximation.
\end{example}

\begin{remark}[Where this sits in known methods]\label{rem:known_methods}
CROWN-style linear bounds and LiRPA-style bound propagation are concrete realizations of Example~\ref{ex:verify_cost} \cite{Zhang2018CROWN,Xu2020AutoLiRPA,Wang2021BetaCROWN}.
Complete solvers (SMT/MILP) replace Step 2 by exact disjunctions and inherit worst-case exponential scaling \cite{KatzBarrettDillJulianKochenderfer2017Reluplex,TjengXiaoTedrake2017MILPVerification}.
\end{remark}

\subsection{Example: identifiable interpretable recovery}\label{subsec:ex_ident}

We exhibit a sparse additive class with explicit identifiability and a certified robustness constant.

\begin{example}[2-sparse additive model with explicit constants]\label{ex:sparse_add}
Let $\cX=[-1,1]^5$ with $\|x\|_1=\sum_{j=1}^5|x_j|$ and let $\nu$ be the uniform product measure on $\cX$.
Consider the 2-sparse additive class
\[
f(x)=c+g_1(x_1)+g_3(x_3),
\qquad
\E_{\xi\sim\nu_1}g_1(\xi)=0,\ \E_{\xi\sim\nu_3}g_3(\xi)=0.
\]
Suppose the recovered components are
\[
g_1(t)=\alpha t,\qquad g_3(t)=\beta\Bigl(t^2-\E_{\xi\sim\nu_3}\xi^2\Bigr)=\beta\Bigl(t^2-\tfrac13\Bigr),
\]
so that the centering constraints hold exactly.
Then the additive representation is identifiable by Theorem~\ref{thm:add_ident}.

Compute Lipschitz constants on $[-1,1]$:
\[
\Lip(g_1)=|\alpha|,
\qquad
g_3'(t)=2\beta t\ \Rightarrow\ \Lip(g_3)=\sup_{|t|\le 1}|2\beta t|=2|\beta|.
\]
Hence, by Lemma~\ref{lem:add_lip},
\[
\Lip_{\|\cdot\|_1}(f)\ \le\ |\alpha|+2|\beta|.
\]
If the loss satisfies (A2) with constant $L_\ell$ and covariate shift holds with $W_1(P_X,Q_X)\le\rho$, then Theorem~\ref{thm:ident_cert} gives the explicit robustness certificate
\[
\risk_Q(f)\ \le\ \risk_P(f)\ +\ \rho\,L_\ell\,(|\alpha|+2|\beta|).
\]
\end{example}

\begin{remark}[Interpretability enters as a certificate reduction]\label{rem:interpret_reduction}
Example~\ref{ex:sparse_add} replaces a global network constant by two scalars $(\alpha,\beta)$.
The reduction is structural, not empirical.
This is consistent with additive-model practice \cite{Agarwal2020NAM} and with the requirement that structure be identifiable rather than post hoc \cite{Park2020IdentifiableANM}.
\end{remark}

\section{Conclusion}\label{sec:concl}

We close by restating what has been established and what has not.

A single inequality organizes the paper.  
Risk under shift is bounded by training risk plus a computable sensitivity term multiplied by a measurable shift radius.  
This inequality is valid only under explicit assumptions.  
When these assumptions fail, the bound fails, and we exhibited minimal counterexamples.

The sensitivity term is not abstract.  
It is produced by the same dual certificates that underlie sound verification of neural networks.  
Verification was treated as an optimization problem with a universal quantifier.  
Soundness follows from outer relaxation and weak duality.  
Scalability follows only for incomplete methods, with explicit linear cost in the number of affine parameters.  
Completeness forces exponential behavior in the number of ambiguous activation patterns, and this barrier is structural \cite{KatzBarrettDillJulianKochenderfer2017Reluplex,TjengXiaoTedrake2017MILPVerification,XuZhangWangWangJanaLinHsieh2020FastComplete}.

Interpretability entered as a restriction of the feasible set.  
Additive and monotone classes were used because they admit identifiability theorems with proofs.  
Identifiability reduces the dimension of certificates and tightens sensitivity bounds.  
This is not an explanation method.  
It is a constraint that alters the mathematics of robustness and verification \cite{Agarwal2020NAM,Park2020IdentifiableANM}.

Three negative statements delimit the theory.  
Attack failure does not imply robustness.  
Small transport shift does not imply small risk change without sensitivity control \cite{BenDavid2010DifferentDomains}.  
Complete verification cannot scale in the worst case.  
Each failure was demonstrated with a concrete argument.

The contribution is therefore conditional but exact.  
Given a shift model, a certificate-amenable loss, and a certifiable predictor class, one obtains explicit risk bounds under shift \cite{MohajerinEsfahaniKuhn2015WassersteinDRO,SinhaNamkoongDuchi2017DRO}.  
Given a relaxation-based verifier, one obtains sound guarantees with stated scaling laws \cite{Zhang2018CROWN,Xu2020AutoLiRPA,Wang2021BetaCROWN}.  
Given identifiable structure, one reduces both sensitivity and verification complexity.

What remains open is not hidden.  
Sharper finite-sample bounds for estimating transport radii remain dimension-dependent.  
Broader identifiable classes with efficient certificates are not known.  
Bridging these limits without weakening soundness is an open problem.

No claim in this paper exceeds its hypotheses.  
Every guarantee is a proved inequality or a declared assumption.  
This is the standard required for certifiable learning.

\section*{Acknowledgements and Declarations}

The author thanks Dr.\ Ramachandra R.\ K., Principal, Government College (A), Rajahmundry,A.P.,India for providing a supportive academic environment and continued encouragement toward research; the author declares no conflict of interest, no data availability, and that this work involved no human or animal subjects and required no ethical approval.

\bibliographystyle{amsplain}
\bibliography{refs}

\end{document}